\theoremstyle{plain}
\newtheorem{theorem}{Theorem}[section]
\newtheorem{corollary}[theorem]{Corollary}
\newtheorem{lemma}[theorem]{Lemma}
\newtheorem{proposition}[theorem]{Proposition}
\theoremstyle{definition}
\theoremstyle{remark}
\newcommand{\elevel}[2]{{\mathscr{E}_{#1}\inv\!\left(#2\right)}}
\newcommand{\dd}[1]{\mathrm{d}#1}
\newcommand{\RR}{{\mathbb{R}}}
\newcommand{\RRplus}[1]{{\mathbb{R}^{#1}_{\scriptscriptstyle{\geq 0}}}}
\newcommand{\set}[2]{\left\{#1\,\left|\,#2\right.\right\}}
\newcommand{\card}[1]{{\left|#1\right|}}
\newcommand{\minus}{{\smallsetminus}}
\newcommand{\inv}{{^{-1}}}
\renewcommand{\to}{\longrightarrow}
\newcommand{\THEN}{{\;\Longrightarrow\;}}
\newcommand{\IFF}{{\;\Longleftrightarrow\;}}
\newcommand{\weight}[1]{{\mathfrak{Wgt}_{#1}}}
\newcommand{\energy}[2]{{\mathscr{E}_{#1}\!\left(#2\right)}}
\newcommand{\metr}[1]{{\mathfrak{Met}_{#1}}}
\newcommand{\ultra}[1]{{\mathfrak{Ult}_{#1}}}
\newcommand{\parts}[1]{{\mathscr{P}\!\left(#1\right)}}
\newcommand{\slhc}[1]{{\mathfrak{sl}\!\left(#1\right)}}
\newcommand{\slhcinv}[1]{{\mathfrak{sl}^{{-1}}\!\left(#1\right)}}
\newcommand{\MST}[1]{{\mathtt{MST}\!\left(#1\right)}}
\newcommand{\spt}[1]{{\mathtt{Trees}\!\left(#1\right)}}
\newcommand{\wspt}[1]{{\overline{\spt{#1}}}}
\newcommand{\treemap}[1]{{\alpha\!\left(#1\right)}}
\newcommand{\total}[2]{{\left\Vert #1\right\Vert}_{#2}}
\begin{document}



\title{Statistical Properties of the Single Linkage Hierarchical Clustering Estimator}

\author{
\name{Dekang Zhu\textsuperscript{a}$^{\ast}$\thanks{$^\ast$Corresponding author. Email: dekang.zhu@foxmail.com};
 Dan P. Guralnik\textsuperscript{b};  Xuezhi Wang\textsuperscript{c}; Xiang Li\textsuperscript{a}; Bill Moran\textsuperscript{c}}
\affil{\textsuperscript{a}School of Electronic Science \& Engineering, National University of Defense Technology, Changsha, China;
\textsuperscript{b}Electrical \& Systems Engineering, University of Pennsylvania, Philadelphia, United States;
\textsuperscript{c}Electrical \& Computer Engineering, RMIT University, Melbourne, Australia.
}
}

\maketitle
\begin{abstract}
Distance-based hierarchical clustering (HC) methods are widely used in unsupervised data analysis but few authors take account of uncertainty in the distance data. We incorporate a statistical model of the uncertainty through corruption or noise in the pairwise distances and investigate the problem of estimating the HC as unknown parameters from measurements. Specifically, we focus on single linkage hierarchical clustering (SLHC) and study its geometry. We prove that under fairly reasonable conditions on the probability distribution governing measurements, SLHC is equivalent to maximum partial profile likelihood estimation (MPPLE) with some of the information contained in the data ignored. At the same time, we show that direct evaluation of SLHC on maximum likelihood estimation (MLE) of pairwise distances yields a consistent estimator. Consequently, a full MLE is expected to perform better than SLHC in getting the correct HC results for the ground truth metric.
\end{abstract}

\begin{keywords}
Consistency; Dendrogram; Exponential Family Distributions; Maximum Likelihood Estimation; Minimum Spanning Tree; Ultra-metric.
\end{keywords}

\begin{classcode}
62H30; 62H12;68Q87;94A15
\end{classcode}

\section{Introduction}\label{section:introduction} Hierarchical clustering (HC) is widely used in a range of applications, especially for situations where the hierarchy is physically meaningful, such as the study of the genetic history of biological populations (evolutionary trees)~\cite{khanafiah2006visualizing,blanchette2012inference} and microarray expression data which contributes to pharmacology, toxicogenomics and disease-subclass determination~\cite{butte2002use,levenstien2003statistical}.
Phylogeny of languages~\cite{mahata2006hierarchical} and authorship attribution~\cite{Segarra_Ribeiro-authorship_attribution} are performed using HC methods. In addition, HC is used where ``flat'' clustering is required but the scale of the clustering remains to be determined.
In wireless sensor network (WSN) routing protocol designs, HC methods have been introduced for grouping sensor nodes into clusters to achieve network scalability, energy efficiency and longer lifetimes~\cite{lung2007applying,lung2010using}.
In the context of WSNs, increasingly being deployed in Internet of Things (IoT) scenarios, energy efficiency and better organization are achieved by using HC methods to form clusters of sensors, enabling the network to operate in a grouped way.
Of particular importance, applications have been proposed in cyber-security, e.g. detection of distributed denial of service (DDoS)
attacks~\cite{karami:2013,conf_sbp_DuY11}, detection of worms and viruses~\cite{Wang_Miller_Kesidis-cluster_hierarchy_for_anomaly_detection} and ``poisoning'' methods have been proposed to disrupt HC~\cite{biggio2014poisoning}.
HC methods are applied in many other domains, including documents classification~\cite{Steinbach2000}, image segmentation~\cite{martinez2006unsupervised}, network topology identification~\cite{Castro:2003fk}, and so on.
Many clustering algorithms begin with a population being assigned some
measure of difference (weight) between members of that population.
Typically, but not always, the pre-processing of the data involves modelling the differences between members of the population by a metric rather than a general weight. That is, the differences are assumed to be symmetric and, more significantly, to satisfy the triangle inequality.
As a result we will restrict our attention to clustering
methods where the weights are actually metrics --- \emph{distance-based clustering}.

There are many approaches to clustering, and HC, of a finite population with a metric to quantify differences (finite metric space); for instance, single/complete linkage clustering and unweighted pair group method with arithmetic mean (UPGMA).
``Flat'' clustering methods are often unable to explicate the finer structure of clusters, and the most significant disadvantage for many of them is the dependence on the number of clusters, which is almost never more than a guess.
Most flat clustering methods are optimizations with respect to some objective functions over all possible clusterings, and it is infeasible for a global search. Therefore, for implementation, they start from an initial random partition and refine it iteratively to get a ``local'' optimum. Inevitably, finding a good starting point is a key issue.
On the contrary, HC methods return a hierarchy of partitions unveiling structure in the data and do not usually require a pre-specified number of clusters. In fact, HC methods are helpful in identifying the correct number of clusters.
Furthermore, HC methods are deterministic. Disadvantages of flat clustering and advantages of HC are discussed in~\cite{manning2008introduction}.

Meanwhile, attempts to justify clustering from an axiomatic standpoint have been made, notably one by Kleinberg~\cite{Kleinberg-impossibility} which resulted in an ``impossibility theorem'' stating a seemingly minimalistic and natural set of requirements of a ``good'' distance-based clustering algorithm that cannot be satisfied simultaneously.
However, as argued in~\cite{Carlsson2010}, HC overcomes this non-existence problem and gives an analogous theorem in which one obtains an existence and uniqueness theorem instead. Moreover, this perspective affords a rich mathematical theory with deep roots in geometry~\cite{Isbell-injective_envelope} and topology~\cite{Carlsson_Memoli-classifying_clustering_schemes}.
In addition to being the only known tractable, unsupervised, well-defined HC method, single linkage hierarchical clustering (SLHC) has particular mathematical properties that make it an attractive option. Only SLHC is stable under small perturbations of the weights, and it is the only one satisfying the following consistency/convergence property~\cite{Carlsson2010}: if the number of i.i.d. sample points goes to infinity, the result of applying SLHC to a data set converges almost surely in the Gromov-Hausdorff sense to an ultra-metric space recovering the multi-scale structure of the probability distribution support.

Despite the above suite of attractive characterizations, a focus on the so-called ``chaining'' phenomenon in SLHC contributes to a growing consensus that SLHC is largely unsuitable for ``real world'' applications~\cite{Jain:1999:DCR:331499.331504}.
It is important to observe, however, that the chaining phenomenon may not be as pertinent a feature of the resulting HC if one allows small perturbations of the weights~\cite{Gama_Segarra_Ribeiro-dithering}.
Indeed, we argue that it is often the case that the measurement/assignment of difference carries some level of uncertainty. This uncertainty can be an inherent ``noise'' in the measurement process but in many applications might just be a technique for modelling the unknowns in the weight assignment process. A typical application might be an attempt to cluster on the basis of common features characterised by a real parameter (temperature, length, etc), where there is uncertainty in the actual value of the parameter.
This work adopts and explores the point of view that the uncertainties inherent in the modelling process require us to view the weights it generates as measurements of an unknown ``true'' metric describing a correct model of the observed population and the differences between its members. The aim here is to estimate (in the statistical sense) the most likely hierarchical clustering of the data associated with that measurement.

Conventional approaches to statistical estimation of partitions and hierarchies view the objects to be clustered as random samples of certain distributions over a prescribed geometry (e.g. Gaussian mixture model estimation using expectation-maximization in Euclidean spaces), and clusters can then easily be described in terms of their most likely origin. Thus, these are really {\it distribution-based} clustering methods~ ---~ not distance-based ones. Our approach, proposed here for the first time, directly attributes uncertainty to the process of obtaining values for the pairwise distances rather than distort the data by mapping it into one's ``favorite space''. To the best of our knowledge, very little work has been done in this vein. Of note is~\cite{Castro04likelihoodbased}, where similar ideas have been applied to the estimation of spanning trees in a communication network. However, pairwise dissimilarity measures characterizing the latent hierarchy are already given in their setting, and {\it additional} data is then applied to infer the correct tree. Thus, their problem may be seen as complementary to ours (see Section~\ref{subsubsec:Preimage_of_u} below).

Our approach to the estimation of the single linkage hierarchy corresponding to a ground truth metric $\theta$ treats $\theta$ as a nuisance parameter to be eliminated. In situations of this kind, it is recommended~\cite{berger1999integrated} to maximize an integrated likelihood function derived from an appropriate so-called ``conditional prior''.
As is often the case, the full maximum likelihood estimator (MLE) is intractable and we have to resort to weaker versions of the optimization.
In particular, we consider certain profile and partial likelihoods. We recall that profile likelihood eliminates the nuisance parameter through maximization rather than integration~\cite{berger1999integrated}, the former being easier to implement. Partial likelihood is based on the idea that the problem parameters may be partitioned into ``blocks'', one of which strongly depends on the parameter of interest, so that the likelihood for that block (given the parameter) is easier to compute~\cite{cox1975partial}.

This paper is organized as follows. Section~\ref{section:prelim} gives a review of preliminary notions useful in establishing the relationship between SLHC and the geometry of the metric cone in terms of spanning trees. This facilitates the discussion of statistical estimation of SLHC in the main section, Section~\ref{section:estimation}. In Section~\ref{section:consistency} we prove the consistency of SLHC as an estimator. We conclude with simulations and a brief discussion in Section~\ref{section:conclusion}.\\

\section{Preliminaries}\label{section:prelim}
We restrict our attention to {\em distance-based} clustering methods, for which it is assumed that a data set $O$ first undergoes initial processing to produce a \emph{weight} (see below), which is then used as input to a clustering map.
A distinction is made in the literature between \emph{flat} clustering methods and \emph{hierarchical} ones: a flat clustering method generates a partition of $O$ from a weight $d$, while a hierarchical clustering method produces a hierarchy of partitions, also known as a dendrogram (see below).

\subsection{Hierarchies and Dendrograms}\label{section:notions}
\subsubsection{Weights and Metrics}\label{section:metrics}
A {\em weight}\footnote{Often called a {\em dissimilarity} in this context} on $O$ is a symmetric function $d\colon O\times O\to\RRplus{}$ satisfying $d(x,x)=0$ for all $x\in O$, whose values $d(x,y)$, $x,y\in O$ represent a ``degree of dissimilarity'' between data entries. We will say that a weight $d$ is {\it strict}, if $d(x,y)>0$ whenever $x\neq y$.
Hereafter we will use the notation $d_e=d_{xy}:=d(x,y)$, with the pair $e=xy$ representing an undirected edge of the complete graph $K_O$ with vertex set $O$ whenever $x\neq y$. The set of edges of $K_O$ will be denoted by $\binom{O}{2}$.

Metrics on $O$ traditionally form a preferred class of weights for clustering purposes. A weight $d$ on $O$ is called a \emph{(semi/pseudo)-metric} if it satisfies the \emph{triangle inequality}, $d_{xz}\leq d_{xy}+d_{yz}$. An \emph{ultra-metric} is a weight satisfying an even stronger condition, the \emph{ultra-metric inequality}, $d_{xz}\leq\max\left\{d_{xy},d_{xz}\right\}$. The spaces of weights, metrics and ultra-metrics on $O$ will be denoted by $\weight{O}, \metr{O}$ and $\ultra{O}$, respectively.

\subsubsection{Partitions}\label{section:partitions}
Recall that a collection $R$ of pairwise-disjoint subsets of $O$ whose union is $O$ is called a {\em partition} of $O$. The elements of a partition will be referred to as its {\em clusters}. We denote the set of partitions of $O$ by $\parts{O}$. Thus, formally, a distance-based flat clustering method on $O$ is merely a function $\metr{O}\to\parts{O}$.

For any $x\in O$ and any partition $R\in\parts{O}$, we will denote the cluster of $R$ containing $x$ by $R_x$. Recall that a partition $R$ is said to \emph{refine} a partition $R'$, if every cluster of $R$ is contained in a cluster of $R'$; we denote this by $R\succeq R'$. A \emph{hierarchy} is a \emph{chain of partitions}: a collection $\mathcal{C}$ of partitions where every $R,R'\in\mathcal{C}$ satisfy either $R\succeq R'$ or $R'\succeq R$. The refinement relation has been shown by Kleinberg~\cite{Kleinberg-impossibility} to be fundamental in any principled discussion of distance-based clustering maps, because of its role as an obstruction to a variety of intuitive consistency requirements commonly seen as desirable for a clustering method.

\subsubsection{Dendrograms}\label{section:dendrograms} Following~\cite{Carlsson2010}, we describe a dendrogram as a pair $(O,\beta)$, where $\beta:[0,\infty)\to\parts{O}$ is a map satisfying the following: (1) there exists $r_0$ s.t. $\beta(r)=\{O\}$ for all $r\geqslant r_0$, (2) if $r_1\leqslant r_2$ then partition $\beta(r_1)$ refines partition $\beta(r_2)$, and (3) for all $r$ there exists $\epsilon>0$ s.t. $\beta(t) = \beta(r)$ for $t\in[r,r+\epsilon]$. Carlsson and M\'emoli have shown that expanding the domain of allowed classifiers from $\parts{O}$ to dendrograms over $O$ resolves the consistency problems indicated in Kleinberg's work in~\cite{Kleinberg-impossibility}.
It will be useful to separate the metric information in a dendrogram (the grading by resolution) from the combinatorial information it conveys: a dendrogram may be uniquely represented by a pair $(\tau,\bm{a})$, where $\tau$ denotes the {\it structure of $u$}~ ---~ the chain of partitions defined by $u$ (with the resolutions forgotten), ordered by refinement; and $\bm{a}$ is the {\it height vector}, see Figure~\ref{fig:dendrograms}, whose coordinates, in order, indicate the minimum resolution at which each partition in the structure occurs in the dendrogram.

Ultra-metrics provide a convenient tool for encoding dendrograms~---~see~\cite{Jardine71,Carlsson2010} and Figure~\ref{fig:dendrograms}: any dendrogram $\beta$ gives rise to an ultra-metric $u=u(\beta)$, and conversely, an ultra-metric $u$ encodes a dendrogram $\beta_u$ via:
\begin{equation}
	u(\beta)_{xy}:=\inf\set{r>0}{\beta(r)_x=\beta(r)_y}\,,\quad
	\beta(r)_x:=\set{y\in O}{u_{xy}\leq r}\,.
\end{equation}
This enables a formulation of the study of \emph{distance-based hierarchical clustering} as the study of `coherent' families of maps $\left\{cl_{O}:\metr{O}\to\ultra{O}\right\}_{O\neq\varnothing}$.\footnote{Where coherence is to be understood in suitable categorical terms, as suggested by Carlsson and M\'emoli~\cite{Carlsson_Memoli-categorical,Carlsson_Memoli-classifying_clustering_schemes}.}
\begin{figure}[t]
	\begin{center}
		\includegraphics[width=.5\columnwidth]{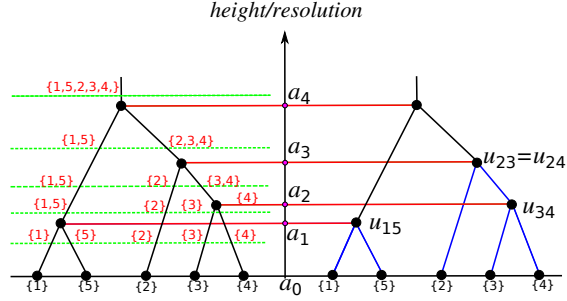}
		\caption{A rooted tree with labelled leaves as a dendrogram (left) and as an ultra-metric on $O=\{1,\ldots,5\}$ (right).}\label{fig:dendrograms}
	\end{center}
\end{figure}

\subsection{Single Linkage Hierarchical Clustering (SLHC)}\label{section:slhc} 
\subsubsection{Construction of SLHC}\label{section:slhc construction} Recalling~\cite{Carlsson2010}, one defines single-linkage hierarchical clustering of a weighted space $(O,d)$ to be the dendrogram $\beta_d$ for which $\beta_d(r)_x=\beta_d(r)_y$ if and only if $O$ contains points $x_0,\ldots,x_m\in O$ where $m\in\mathbb{N}$, $x_0=x$, $x_m=y$ and $d(x_{i-1},x_i)\leq r$ for $i\in\{0,\ldots,m\}$. Following~\cite{Gower69}, SLHC is often implemented by constructing a minimum spanning tree (MST) in $(O,d)$: the partition $\beta_d(r)$ is obtained from \emph{any} MST $T$ of $(O,d)$ as the set of connected components obtained from $T$ by removing all edges of $T$ of length exceeding $r$. 
The corresponding ultra-metric, which we denote by $\slhc{d}$, has $\slhc{d}_{xy}$ equal to the maximum $d$-length among all edges in $T$ separating $x$ from $y$. It is well known that:
\begin{equation}\label{eq:slhc as sup}
	\slhc{d}_{xy}=\sup\set{u_{xy}}{u\in\ultra{O}\,,\; u\leq d}\,.
\end{equation}
From this presentation, some properties emerge which characterize single linkage among all operators $E:\metr{O}\to\metr{O}$:
\begin{proposition}\label{prop:slhc characterization} There exists one and only one operator $E:\metr{O}\to\metr{O}$ with $E(\metr{O})=\ultra{O}$ such that
	\begin{equation}
	\mathrm{(a)}\;\; EEd=Ed\,,\qquad
	\mathrm{(b)}\;\; Ed\leq d\,,\qquad
	\mathrm{(c)}\;\; d\leq d'\THEN Ed\leq Ed'\,,
	\end{equation}
	for all $d,d'\in\metr{O}$, and this operator is SLHC over $O$.
\end{proposition}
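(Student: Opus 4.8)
The plan is to prove existence and uniqueness separately, leaning throughout on the characterization of SLHC as the largest ultra-metric dominated by $d$, i.e.\ equation~\eqref{eq:slhc as sup}.

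For existence I would take $E=\slhc{\cdot}$ and check the three axioms together with the image condition. The one point needing care is that the pointwise supremum in~\eqref{eq:slhc as sup} is again an ultra-metric (the family being non-empty, as it contains the zero weight, and bounded above by $d$, so the supremum is finite): writing $w_{xy}=\sup_\alpha (u_\alpha)_{xy}$ with each $u_\alpha\in\ultra{O}$, one has $(u_\alpha)_{xz}\leq\max\{(u_\alpha)_{xy},(u_\alpha)_{yz}\}\leq\max\{w_{xy},w_{yz}\}$ for every $\alpha$, and taking the supremum over $\alpha$ yields the ultra-metric inequality for $w$. Granting this, $\slhc{d}$ is an ultra-metric $\leq d$, which is (b); monotonicity (c) is immediate because enlarging $d$ only enlarges the family of ultra-metrics lying below it; and (a) follows since $\slhc{d}$ is itself an ultra-metric below $\slhc{d}$, hence the largest such, so $\slhc{\slhc{d}}=\slhc{d}$. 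Finally every value $\slhc{d}$ lies in $\ultra{O}$ by construction, while $\slhc{u}=u$ for each $u\in\ultra{O}$ (as $u$ is the greatest ultra-metric below itself), so the image of the operator is exactly $\ultra{O}$.

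For uniqueness, let $E$ be any operator satisfying (a)--(c) with $E(\metr{O})=\ultra{O}$. The key observation is that $E$ restricts to the identity on $\ultra{O}$: given $u\in\ultra{O}=E(\metr{O})$, write $u=Ed'$ and apply (a) to get $Eu=EEd'=Ed'=u$. With this I would sandwich $Ed$ against $\slhc{d}$ from both sides. First, $Ed\in\ultra{O}$ and $Ed\leq d$ by (b), so $Ed$ is an ultra-metric below $d$ and hence $Ed\leq\slhc{d}$ by~\eqref{eq:slhc as sup}. Conversely, for any $u\in\ultra{O}$ with $u\leq d$, monotonicity (c) gives $Eu\leq Ed$, and since $Eu=u$ we obtain $u\leq Ed$; as $Ed$ thus dominates every such $u$ pointwise, taking the supremum yields $\slhc{d}\leq Ed$. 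The two inequalities force $Ed=\slhc{d}$ for every $d$, whence $E=\slhc{\cdot}$.

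The main obstacle is really just bookkeeping: the result is precisely the assertion that SLHC is the unique monotone, idempotent, deflationary (interior/kernel-type) operator onto the sub-poset $\ultra{O}\subseteq\metr{O}$, and no single step is deep. Each hypothesis is used exactly once in the uniqueness half, and the step to watch is the reverse inequality $\slhc{d}\leq Ed$, since it is the only place where all three structural assumptions must be combined: (b) supplies the bound $Ed\leq\slhc{d}$, the image condition together with idempotence (a) forces $E|_{\ultra{O}}=\mathrm{id}$, and monotonicity (c) then propagates this fixed-point behaviour to deliver $\slhc{d}\leq Ed$.
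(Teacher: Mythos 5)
Your proof is correct and follows essentially the same route as the paper's: both hinge on the observation that (a) together with $E(\metr{O})=\ultra{O}$ forces $E$ to fix every ultra-metric, and then use (b) and (c) to sandwich $Ed$ against $\slhc{d}$ from both sides. The only cosmetic difference is that the paper runs the comparison symmetrically for two abstract operators $E,F$ satisfying the axioms, whereas you specialize one of them to $\slhc{\cdot}$ and invoke the sup-formula \eqref{eq:slhc as sup} directly, additionally spelling out the existence verification that the paper delegates to \eqref{eq:slhc as sup}.
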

This simple characterization of the single linkage operator, encompassing very general notions of (a) stability; (b) information bottlenecking; and (c) consistency, appears to have gone unnoticed up till now (compare with the categorical characterizations by Carlsson and M\'emoli~\cite{Carlsson_Memoli-categorical}), and serve as our motivation for studying SLHC a target for statistical estimation in the present context.

Proofs of this Proposition and of the remaining technical results regarding the geometry of SLHC presented in this section are given in Appendix~\ref{app:Proofs_of_materials_in_preliminary}. To the best of our knowledge, these results are new.

\subsubsection{Weighted Spanning Trees}\label{section:weighted trees} In view of the preceding paragraph, it will be convenient to introduce the space of weighted spanning trees over $O$. First, we denote the set of spanning trees in $K_O$ by $\spt{O}$, while identifying each spanning tree $T$ with its edge set. For a weight $d\in\weight{O}$, the subset of its MSTs will be denoted by $\MST{d}$.

Define the space $\wspt{O}$ of {\it weighted spanning trees} to be the space of all pairs $(T,w)$~ ---~ henceforth denoted $T^w$ in this context~ ---~ with $T\in\spt{O}$ and $w:T\to\RRplus{}$ being referred to as a {\it weight on $T$}. The {\it total weight} of $T^w$ is defined to be $\total{T}{w}:=\sum_{e\in T}w_e$. Of course, a weight $d$ on $O$ naturally restricts to a weight $w$ on $T$ through $w_e:=d_e$ when $e\in T$; we denote the corresponding weighted spanning tree by $T^d$, by abuse of notation.

The obvious identification(s) of the set of all weights on a fixed tree $T$ with $\RRplus{n-1}$ yields a topology on $\wspt{O}$ with $\card{\spt{O}}=n^{n-2}$ clopen\footnote{That is, both closed and open~\cite{Munkres-topology}.} connected components, one for each $T\in\spt{O}$, each homeomorphic to $\RRplus{n-1}$. 

For a given spanning tree $T$, note that every $xy\in\binom{O}{2}$ determines a unique edge path in $T$ joining $x$ with $y$. We denote this path by $p(T)_{xy}$. We then observe that the construction of the preceding paragraph makes use of the map $\alpha:\wspt{O}\to\ultra{O}$ sending every $T^w\in\wspt{O}$ to the ultra-metric defined by:
\begin{equation}\label{eqn:ultra-metric from a tree}
	\treemap{T^w}_{xy}:=\max\left\{w_e\,\left| e\in p(T)_{xy} \right.\right\}.
\end{equation}
The continuity of $\alpha$ is fairly straightforward. Curiously, it is central to establishing the consistency of SLHC as an estimator (Theorem~\ref{thm:consistency_of_SLHC}), so we include a proof of this fact in this paper for the sake of completeness (Lemma~\ref{lemma:continuity of alpha}).

The following elementary result has surprisingly significant impact. We have not been able to locate it or its corollaries in the literature.
\begin{lemma}\label{lemma:mst comparison} Let $v,w$ be real-valued weights on the edges of the complete graph $K_O$. Suppose that $v_e<v_f$ implies $w_e<w_f$ for all $e,f\in {O\choose 2}$. Then $\MST{v}\subseteq\MST{w}$.
\end{lemma}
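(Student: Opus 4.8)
The plan is to reduce the claim to the edge-local optimality criterion for minimum spanning trees and then transport order information from $v$ to $w$ via the hypothesis. I would use the classical cycle criterion: a spanning tree $T\in\spt{O}$ belongs to $\MST{d}$ if and only if, for every non-tree edge $g=xy$ (so $g\in\binom{O}{2}$ with $g\notin T$) and every edge $e\in p(T)_{xy}$, one has $d_e\leq d_g$. The point of this reformulation is that testing membership in $\MST{d}$ only ever requires comparing a path edge against the chord that closes its fundamental cycle, so it is governed entirely by the order structure of $d$ on $\binom{O}{2}$, which is exactly the data the hypothesis constrains.

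To show $\MST{v}\subseteq\MST{w}$ I would take an arbitrary $T\in\MST{v}$ and verify the cycle criterion for $w$. Fix a non-tree edge $g=xy$ and an edge $e\in p(T)_{xy}$. Since $T\in\MST{v}$, the criterion applied to $v$ gives $v_e\leq v_g$, and the task is to upgrade this to $w_e\leq w_g$. When the $v$-inequality is strict, $v_e<v_g$, the hypothesis applied to the ordered pair $(e,g)$ yields $w_e<w_g$, hence $w_e\leq w_g$. Quantifying over all non-tree edges $g$ and all $e\in p(T)_{xy}$ would then establish the cycle criterion for $w$ and place $T$ in $\MST{w}$.

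The step I expect to be the main obstacle is the tie case $v_e=v_g$: here the stated implication says nothing about the relative $w$-order of $e$ and $g$, yet a reversal of such a $v$-tie by $w$ is precisely what could produce a strictly cheaper $w$-exchange across the cycle $p(T)_{xy}\cup\{g\}$ and thereby eject $T$ from $\MST{w}$. I would therefore first dispose of the generic situation in which $v$ is strict (injective on $\binom{O}{2}$): then every pair $e\neq f$ satisfies $v_e<v_f$ or $v_f<v_e$, so the hypothesis forces $w$ to realise the \emph{same} strict total order on $\binom{O}{2}$; both weights then run Kruskal's algorithm identically, and one in fact obtains the stronger conclusion $\MST{v}=\MST{w}$. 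The residual difficulty is exactly the control of $v$-ties, and I expect this to be where the real content of the lemma sits: it will require either a general-position assumption on $v$ (natural in the statistical setting, where measured weights are almost surely injective on $\binom{O}{2}$) or an explicit compatibility condition ensuring that the $w$-order refines, rather than contradicts, the partial order imposed by $v$.
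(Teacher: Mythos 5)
Your reduction to the path-optimality (cycle) criterion is sound, and your unease about $v$-ties is not a defect of your attempt: the tie case is fatal, and the lemma as printed is actually \emph{false}. Take $O=\{1,2,3\}$, let $v$ be constant ($v\equiv 1$) and let $w$ take distinct values, say $w_{12}=1$, $w_{13}=2$, $w_{23}=3$. The hypothesis ``$v_e<v_f$ implies $w_e<w_f$'' holds vacuously, $\MST{v}$ consists of all three spanning trees of $K_O$, yet $\MST{w}$ contains only the tree with edges $12$ and $13$, so $\MST{v}\not\subseteq\MST{w}$. Hence you should not search for a way to dispose of ties: none exists. The correct statement reverses the inclusion: under the stated hypothesis one gets $\MST{w}\subseteq\MST{v}$ (equivalently, the printed conclusion $\MST{v}\subseteq\MST{w}$ holds under the reversed hypothesis $w_e<w_f\Rightarrow v_e<v_f$). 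Pleasantly, your own method proves the corrected statement with no tie trouble at all: take $T\in\MST{w}$, a chord $g=xy\notin T$ and $e\in p(T)_{xy}$; path-optimality for $w$ gives $w_e\leq w_g$, and if $v_e\leq v_g$ failed, i.e.\ $v_g<v_e$, the hypothesis applied to the pair $(g,e)$ would force $w_g<w_e$, a contradiction. So $v_e\leq v_g$ throughout and $T\in\MST{v}$; the contrapositive is exactly what renders ties harmless in this direction. Your observation that an injective $v$ forces $w$ to induce the same strict total order, whence $\MST{v}=\MST{w}$, is also correct, and identifies a regime where the printed inclusion does hold.

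For comparison, the paper argues by Kruskal exchange moves: choosing $T\in\MST{v}$ and $T^\ast\in\MST{w}$ with $\card{T^\ast\minus T}$ minimal positive, it claims $v_{e^\ast}>v_f$ for every other edge $f$ of the fundamental cycle $\ell$ of $e^\ast\in T^\ast\minus T$, asserting that any violation yields either a $v$-cheaper tree or a tree $T^\sharp\in\MST{v}$ strictly closer to $T^\ast$, ``a contradiction in either case.'' That step fails precisely in your tie case $v_{e^\ast}=v_f$: the exchange may land on $T^\sharp\in\MST{w}$ (in the counterexample above, $T^\sharp=T^\ast$ itself, so the distance drops to zero), and since the minimum was taken only over pairs at positive distance with $T$ outside $\MST{w}$, no contradiction results. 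In other words, the exact spot where your proposal stalls is the spot where the paper's own argument silently breaks. The error is harmless downstream, however: Corollary~\ref{cor:mst invariance under increasing functions} applies the lemma with $v=g\circ w$ for strictly increasing $g$, where $v_e<v_f\Longleftrightarrow w_e<w_f$, so both inclusions --- hence $\MST{v}=\MST{w}$ --- follow from the corrected lemma, and the results built on that corollary (Theorem~\ref{thm:slhc_maxi_PLE}, the maximum-entropy discussion, and the consistency argument) are unaffected.
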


\begin{corollary}\label{cor:mst invariance under increasing functions} Let $w$ be a real-valued weight on the edges of the complete graph $K_O$. If $g:\RR\to\RR$ is a strictly increasing function, then $T$ is an MST of $w$ iff it is an MST of $g\circ w$.
\end{corollary}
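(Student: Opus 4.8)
The plan is to derive the corollary as a double application of Lemma~\ref{lemma:mst comparison}, exploiting the fact that a strictly increasing function both preserves and reflects the strict order on $\RR$. Since ``$T$ is an MST of $w$ iff it is an MST of $g\circ w$'' is just the set equality $\MST{w}=\MST{g\circ w}$, it suffices to prove the two inclusions separately.

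First I would establish $\MST{w}\subseteq\MST{g\circ w}$. To this end, apply Lemma~\ref{lemma:mst comparison} with the lemma's weight $v$ taken to be $w$ itself and the lemma's weight $w$ taken to be $g\circ w$. The hypothesis to verify is that $w_e<w_f$ implies $(g\circ w)_e<(g\circ w)_f$ for all $e,f\in\binom{O}{2}$; but this is immediate, since $g$ strictly increasing gives $g(w_e)<g(w_f)$ whenever $w_e<w_f$.

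For the reverse inclusion $\MST{g\circ w}\subseteq\MST{w}$, I would invoke the lemma a second time, now with the lemma's weight $v$ taken to be $g\circ w$ and the lemma's weight $w$ taken to be $w$. Here the required hypothesis reads: $(g\circ w)_e<(g\circ w)_f$ implies $w_e<w_f$. This is the one place where the \emph{order-reflecting} property of a strictly increasing function is needed: if one had $w_e\geq w_f$, then monotonicity would force $g(w_e)\geq g(w_f)$, contradicting $(g\circ w)_e<(g\circ w)_f$; hence $w_e<w_f$, as required.

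Combining the two inclusions yields $\MST{w}=\MST{g\circ w}$, which is exactly the claimed equivalence. I do not anticipate any serious obstacle: the argument is entirely formal once one recognizes that the hypothesis of Lemma~\ref{lemma:mst comparison} can be read in both directions. The only mildly subtle point is the order-reflecting step in the reverse inclusion, and even that is elementary, resting solely on the monotonicity of $g$ (injectivity of $g$ is not even needed). The essential insight is simply that Lemma~\ref{lemma:mst comparison} upgrades a one-sided order comparison between weights into a one-sided inclusion of MST sets, so a two-sided comparison --- guaranteed here by strict monotonicity --- immediately delivers two-sided invariance.
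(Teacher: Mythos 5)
Your proof is correct and is essentially the paper's own argument: the paper sets $v=g\circ w$, observes $v_e<v_f\IFF w_e<w_f$, and applies Lemma~\ref{lemma:mst comparison}, which amounts to exactly your two inclusions obtained from the two directions of the biconditional. You have merely spelled out the double application of the lemma (including the elementary order-reflecting step) that the paper compresses into one line.
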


\subsubsection{The geometry of fibers of SLHC}
\label{subsubsec:Preimage_of_u}
Estimating the parameter $\bm{u}=\slhc{\bm{\theta}}$ from a measurement $\bm{x}$ of a ``ground truth'' metric $\bm{\theta}\in\metr{O}$ while treating the remaining information regarding $\bm{\theta}$ as a nuisance parameter requires detailed understanding of the point pre-images of the map $\slhc{\cdot}$, which we develop below (we shall, from now on, use bold symbols to denote the weight/metric/ultra-metric as multi-dimentional parameters in the context of the estimation problem discussed in this paper).

For any weight $w\in\weight{O}$ we will say that $w$ is {\it generic}, if no two values of $w$ coincide, that is: if $w_e\neq w_f$ for all $e,f\in{O\choose 2}$ with $e\neq f$. It is a textbook result that a generic metric has exactly one minimum spanning tree~\cite{West-textbook}. This may be restated geometrically as follows:
\begin{lemma}\label{lemma:spanning tree decomposition} For each $T\in\spt{O}$, define $C(T)$ to be the set of all $d\in\weight{O}$ satisfying $T\in\MST{d}$. Then $\weight{O}$ is the union of the pairwise interiorly-disjoint domains $\{C(T)\}_{T\in\spt{O}}$.\qedhere
\end{lemma}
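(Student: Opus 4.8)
The plan is to establish the two assertions separately: that the domains $C(T)$ cover $\weight{O}$, and that distinct domains have disjoint interiors. The covering claim is the easy half. Since $O$ is finite, $K_O$ has only finitely many spanning trees, so for any $d\in\weight{O}$ the total weight $\total{T}{d}$ attains a minimum over $T\in\spt{O}$; any minimizer $T$ satisfies $T\in\MST{d}$ by definition, i.e. $d\in C(T)$. Hence $\weight{O}=\bigcup_{T\in\spt{O}}C(T)$.

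For the interior-disjointness I would exploit the genericity fact quoted just above the lemma. First I would record that the subset of generic weights is open and dense in $\weight{O}$: writing $n=\card{O}$ and identifying $\weight{O}$ with $\RRplus{\binom{n}{2}}$ via $d\mapsto(d_e)_{e\in\binom{O}{2}}$, the non-generic weights form the finite union $\bigcup_{e\neq f}\set{d}{d_e=d_f}$ of coordinate hyperplanes, a closed set with empty interior, so its complement is open and dense. Next I would note that the standard argument producing a unique MST is purely graph-theoretic and uses only that the edge values are pairwise distinct; it therefore applies to every generic weight, not merely to generic metrics, giving $\card{\MST{d}}=1$ whenever $d$ is generic.

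With these two observations the disjointness is immediate. Suppose, for $T\neq T'$, that $d\in\operatorname{int}C(T)\cap\operatorname{int}C(T')$. Then some neighbourhood $U$ of $d$ lies in $C(T)\cap C(T')$, so every weight in $U$ has both $T$ and $T'$ among its MSTs. But $U$, being open, meets the dense set of generic weights; any generic $d'\in U$ would then have at least two distinct MSTs, contradicting $\card{\MST{d'}}=1$. Hence $\operatorname{int}C(T)\cap\operatorname{int}C(T')=\varnothing$, which is exactly the interior-disjointness claimed.

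I expect the only real subtlety to be bookkeeping rather than a genuine obstacle: making sure the uniqueness-of-MST result is invoked for arbitrary generic \emph{weights} (the cited textbook statement is phrased for metrics, but the triangle inequality plays no role), and confirming that the $C(T)$ deserve to be called domains. For the latter I would, if needed, record the cycle characterization, namely that $d\in C(T)$ iff $d_f\geq\max\set{d_e}{e\in p(T)_{xy}}$ for every non-tree edge $f=xy$, which exhibits $C(T)$ as an intersection of finitely many closed half-spaces (a closed convex cone) and whose strict version identifies $\operatorname{int}C(T)$ with the locus on which $T$ is the unique MST; this furnishes an alternative route to the disjointness as well as justifying the terminology.
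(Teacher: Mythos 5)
Your proof is correct and takes essentially the same approach as the paper, which gives no separate argument but presents the lemma precisely as a geometric restatement of the textbook fact that a generic weight has a unique MST: covering follows from the existence of minimum spanning trees, and interior-disjointness from the density of generic weights together with that uniqueness, exactly as you spell out (including the correct observation that the triangle inequality plays no role). Your supplementary cycle characterization of $C(T)$ is likewise the paper's condition $w\geq\treemap{T^w}$ from Proposition~\ref{prop:slhc and cones}, so even your alternative route stays within the authors' framework.
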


\begin{lemma}\label{lemma:cone of a tree} For $T\in\spt{O}$, the set $C(T)$ is a closed pointed convex cone.
\end{lemma}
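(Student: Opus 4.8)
The plan is to present $C(T)$ as the solution set of a finite system of homogeneous linear inequalities lying inside the nonnegative orthant, from which closedness, convexity, the cone property, and pointedness all follow at once. The key observation is the defining property of a minimum spanning tree: by definition $T\in\MST{d}$ exactly when $T$ has minimal total weight among all spanning trees, i.e.
\begin{equation}
T\in\MST{d}\IFF \total{T}{d}\leq\total{T'}{d}\quad\text{for all }T'\in\spt{O}.
\end{equation}
Since the total weight $\total{T}{d}=\sum_{e\in T}d_e$ is a \emph{linear} functional of the edge-coordinates $(d_e)_{e\in\binom{O}{2}}$, each of these conditions is a single homogeneous linear inequality $\total{T}{d}-\total{T'}{d}\leq 0$.

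With this in hand I would write $C(T)$ as an intersection. Letting $H_{T'}:=\set{d\in\RR^{\binom{O}{2}}}{\total{T}{d}\leq\total{T'}{d}}$ denote the closed half-space cut out by the $T'$-inequality, the equivalence above reads
\begin{equation}
C(T)=\weight{O}\cap\bigcap_{T'\in\spt{O}}H_{T'},
\end{equation}
where $\weight{O}$ is identified with the nonnegative orthant of $\RR^{\binom{O}{2}}$. Because $\card{\spt{O}}=n^{n-2}$ is finite, this is a finite intersection. Each $H_{T'}$ is a closed half-space through the origin, and since every defining inequality is homogeneous, each is in fact a closed convex cone; likewise the orthant $\weight{O}$ is a closed convex cone. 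As a finite intersection of closed convex cones, $C(T)$ is therefore closed, convex, and closed under multiplication by scalars $\lambda\geq 0$, giving closedness, convexity, and the cone property simultaneously. It is nonempty with apex the origin, since under the zero weight every spanning tree has total weight $0$, so $0\in C(T)$.

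Pointedness I would deduce directly from the containment $C(T)\subseteq\weight{O}$: if both $d\in C(T)$ and $-d\in C(T)$, then $d$ and $-d$ both lie in the nonnegative orthant, forcing $d=0$. Hence $C(T)\cap(-C(T))=\{0\}$, which is exactly pointedness, as any cone contained in the pointed orthant is itself pointed.

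I do not expect a substantive obstacle. The only step needing care is the opening reduction of ``$T$ is a minimum spanning tree'' to the linear system, and this is immediate from the definition of $\MST{d}$ together with the linearity of total weight. If a more economical description of $C(T)$ is wanted (e.g.\ to identify its facets), one may replace the global comparisons $\total{T}{d}\leq\total{T'}{d}$ by the local cycle-exchange inequalities $d_f\geq d_e$ for $f=xy\notin T$ and $e\in p(T)_{xy}$; these give a smaller equivalent system, but for the present qualitative claim the direct total-weight formulation is cleanest and avoids invoking the exchange argument.
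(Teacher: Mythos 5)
Your proof is correct, and it takes a genuinely different route from the paper's. The paper derives the lemma as a corollary of Proposition~\ref{prop:slhc and cones}: using the characterization $w\in C(T)\IFF w\geq\treemap{T^w}$, it takes $a,b\in C(T)$ and $\lambda\geq 0$ and verifies $\treemap{T^{a+\lambda b}}\leq a+\lambda b$ coordinatewise via
\begin{equation}
\max_{e\in p(T)_{xy}}(a_e+\lambda b_e)\leq\max_{e\in p(T)_{xy}}a_e+\lambda\cdot\max_{e\in p(T)_{xy}}b_e\leq a_{xy}+\lambda b_{xy}\,,
\end{equation}
with closedness coming from the finiteness of the defining conditions. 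You instead work straight from the definition of a minimum spanning tree as a minimizer of total weight: since $\total{T}{d}$ is linear in $d$, the conditions $\total{T}{d}\leq\total{T'}{d}$, $T'\in\spt{O}$, cut out finitely many homogeneous closed half-spaces, exhibiting $C(T)$ as a polyhedral cone intersected with the nonnegative orthant. Your route buys three things: it avoids the exchange-argument machinery behind Proposition~\ref{prop:slhc and cones} entirely (your only input is the definition of $\MST{d}$ and Cayley's count $\card{\spt{O}}=n^{n-2}$, and even finiteness is dispensable since arbitrary intersections of closed convex cones are closed convex cones); it gives the stronger conclusion that $C(T)$ is polyhedral; and it proves pointedness explicitly via $C(T)\cap(-C(T))\subseteq\weight{O}\cap(-\weight{O})=\{0\}$, a point the paper's proof asserts in the statement but never argues. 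What the paper's route buys is economy within its own architecture: Proposition~\ref{prop:slhc and cones} is needed anyway for Lemmas~\ref{lemma:spanning tree decomposition} and~\ref{lemma:sl inverse images}, and its path-maximum characterization ties $C(T)$ directly to the map $\alpha$ that drives the rest of the analysis. Your closing remark that the local cycle-exchange inequalities $d_f\geq d_e$, $e\in p(T)_{xy}$, give an equivalent smaller system is precisely the content of the paper's characterization $w\geq\treemap{T^w}$, so the two descriptions of $C(T)$ are reconciled there.
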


\begin{lemma}\label{lemma:sl inverse images} For every $u\in\ultra{O}$ we have:
	\begin{equation}\label{preimage decomposition}
	\slhcinv{u}=\bigcup_{T\in\MST{u}}W(T,u)\,,\quad
	W(T,u):=\set{w\in\weight{O}}{T^w=T^u\,,\;w\geq u}.
	\end{equation}
	Each $W(T,u)$ is a finite-sided convex polytope and no two of these polytopes have an interior point in common.
\end{lemma}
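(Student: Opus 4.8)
The plan is to establish the three assertions in turn, the set-theoretic identity being the substantive one. Throughout I will use the two presentations of single linkage recorded above: that $\slhc{w}=\treemap{T^w}$ for \emph{any} $T\in\MST{w}$, and that $\slhc{w}\leq w$ while $\slhc{u}=u$ for ultra-metrics $u$ (idempotency, from Proposition~\ref{prop:slhc characterization} together with surjectivity of $\slhc{\cdot}$ onto $\ultra{O}$). I will also invoke the classical cycle-optimality characterization of minimum spanning trees: a spanning tree $T$ lies in $\MST{w}$ if and only if $w_e\geq w_f$ for every non-tree edge $e=xy$ and every $f\in p(T)_{xy}$. This formulation remains valid in the presence of ties, which is exactly what lets me handle the non-generic weights $u$ and $w$ arising here.

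For the inclusion $\slhcinv{u}\subseteq\bigcup_{T\in\MST{u}}W(T,u)$, suppose $\slhc{w}=u$ and fix any $T\in\MST{w}$, so that $u=\treemap{T^w}$. For a tree edge $e=xy$ the path $p(T)_{xy}$ is the single edge $e$, whence $u_e=w_e$; thus $T^w=T^u$. Domination $\slhc{w}\leq w$ gives $w\geq u$. Finally, for a non-tree edge $e=xy$ one has $u_e=\max\{w_f\mid f\in p(T)_{xy}\}=\max\{u_f\mid f\in p(T)_{xy}\}$, so the cycle condition certifies $T\in\MST{u}$ and hence $w\in W(T,u)$. Conversely, given $T\in\MST{u}$ and $w\in W(T,u)$, idempotency yields $u=\treemap{T^u}$, i.e. $u_{xy}=\max\{u_f\mid f\in p(T)_{xy}\}$; since $w$ agrees with $u$ on $T$ and $w\geq u$ elsewhere, every non-tree edge $e=xy$ satisfies $w_e\geq u_e=\max\{u_f\mid f\in p(T)_{xy}\}=\max\{w_f\mid f\in p(T)_{xy}\}$, so $T\in\MST{w}$ and therefore $\slhc{w}=\treemap{T^w}$ equals $u$ on every pair. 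This gives the reverse inclusion, and hence the identity.

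The polytope claim is then immediate from the explicit description $W(T,u)=\set{w\in\weight{O}}{w_e=u_e\ (e\in T),\ w_f\geq u_f\ (f\in\binom{O}{2})}$: this is the solution set of finitely many affine equalities and inequalities in the coordinates of $\weight{O}\cong\RRplus{\binom{n}{2}}$, hence a finite-sided closed convex polyhedral region (unbounded, as the off-tree coordinates carry only lower bounds). For interior-disjointness, I read ``interior point'' as a point of the relative interior inside the affine hull, matching the lower-dimensionality of these cells; this relative interior is $\set{w}{w_e=u_e\ (e\in T),\ w_f>u_f\ (f\notin T)}$. If some $w$ lay in the relative interiors of both $W(T,u)$ and $W(T',u)$ with $T\neq T'$, pick $e\in T'\minus T$ (nonempty, since distinct spanning trees of $K_O$ have the same cardinality $n-1$): membership in $W(T',u)$ forces $w_e=u_e$, while $e\notin T$ forces $w_e>u_e$, a contradiction.

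The main obstacle is the set identity, and within it the verification that the chosen spanning tree is simultaneously an MST of $w$ and of $u$. The delicate point is that neither $w$ nor $u$ is generic, so I must rely on the tie-tolerant cycle-optimality condition rather than on uniqueness of the MST, and must track carefully that $\treemap{\cdot}$ reads off maxima along tree paths; it is precisely this path-maximum identity, combined with $w=u$ on $T$, that converts the MST certificate for $w$ into the one for $u$ and back.
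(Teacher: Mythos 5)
Your proof is correct, and its engine is the same as the paper's: the tie-tolerant cycle-optimality criterion you invoke --- $T\in\MST{w}$ iff $w_{xy}\geq w_f$ for every non-tree edge $xy$ and every $f\in p(T)_{xy}$ --- is exactly the equivalence $T\in\MST{w}\IFF w\geq\treemap{T^w}$ of Equation~\eqref{eq:MST condition}, and your two inclusions re-derive, inline, the content of parts (a)--(c) of Proposition~\ref{prop:slhc and cones}; the paper itself disposes of Lemma~\ref{lemma:sl inverse images} in one line by citing part (c) together with Lemma~\ref{lemma:spanning tree decomposition}. The packaging differs: the paper proves \eqref{eq:MST condition} from scratch via Kruskal exchange moves in the appendix and then reuses Proposition~\ref{prop:slhc and cones} elsewhere (e.g.\ in Theorem~\ref{thm:consistency_of_SLHC}), whereas you import the exchange lemma as a textbook fact and argue directly, which is legitimate and self-contained modulo that citation, at the cost of duplicating work the paper factors out. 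Where you genuinely add something is the second half of the statement, which the paper's appendix proof leaves implicit: you exhibit $W(T,u)$ explicitly as the solution set of the affine conditions $w_e=u_e$ for $e\in T$ and $w_f\geq u_f$ for $f\notin T$ (an unbounded, finite-sided polyhedron of codimension $n-1$, consistent with the paper's own remark preceding \eqref{eq:C(T,u)}), and you correctly note that the interior-disjointness claim is vacuous for ambient interiors, since these cells are not full-dimensional, and must be read relative to affine hulls. Your disjointness argument --- a point in the relative interiors of both $W(T,u)$ and $W(T',u)$ with $T\neq T'$ would have to satisfy $w_e=u_e$ and $w_e>u_e$ for any $e\in T'\minus T$, which is nonempty as distinct spanning trees have equal cardinality --- is clean and actually settles that claim, which the paper's implicit appeal to the interior-disjointness of the full-dimensional cones $C(T)$ in Lemma~\ref{lemma:spanning tree decomposition} does not literally do.
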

A critical distinction between the case of general weights/dissimilarities and the metric case is that, while the $W(T,u)$ are unbounded polytopes, the polytopes
\begin{equation}\label{eq:C(T,u)}
C(T,u):=W(T,u)\cap\metr{O}
\end{equation}
are compact, due to the characterization
\begin{equation}\label{eq:inequalities for fiber}
C(T,u)=\set{d\in\metr{O}}{u\leq d\leq\omega(T^u)}\,,
\end{equation}
where $\omega(T^u)$ is the {\it tree metric on $O$ induced from the weights given by $u$}:
\begin{equation}
\omega(T^u)_{xy}=\sum_{e\in p(T)_{xy}}d_e.
\end{equation}
The triangle inequality forces $d\leq\omega(T^u)$ on any metric $d$ satisfying $T^d=T^u$. Note that $\omega(T^u)$ coincides with $u=\alpha(T^u)$ on edges of $T$, as $T\in\MST{u}$.

The main implications of the resulting decomposition
\begin{equation}\label{eq:slhc fiber}
\slhcinv{u}\cap\metr{O}=\bigcup_{T\in\MST{u}}C(T,u)
\end{equation}
for this work are significant: the single linkage operator $\slhc{\cdot}$ is only smooth when restricted to one of the $C(T)$, hence any explicit integration over $\slhcinv{u}\cap\metr{O}$ needs to be decomposed as a sum of integrals over the relevant $C(T,u)$. Unfortunately, the structure (e.g. set of vertices) of these polytopes is known to be extremely complex, as they share vertices with the set of extreme directions of the metric cone $\metr{O}$, whose enumeration is a long-standing open problem~\cite{Avis,Deza}. This makes obtaining closed-form integration formulae over $C(T,u)$~ ---~ and marginalizing over $C(T,u)$ in particular~ ---~ a very hard combinatorial problem. In addition, the number of MSTs for a given ultra-metric poses an additional hurdle on the way to solving optimization problems over $\slhcinv{u}\cap\metr{O}$.

\section{Estimation of SLHC}\label{section:estimation}
\subsection{Statistical model for hierarchical clustering}
Distances between data points are assumed to be corrupted by noise from a suitable distribution. One na\"ive approach to estimating hierarchical clustering is to estimate the true distances between data points and feed them into the given HC algorithm. This is almost certainly not optimal in the highly nonlinear context of HC methods, none of which~ ---~ at least among those satisfying requirement (a) of Proposition~\ref{prop:slhc characterization}, which is quite natural~ ---~ are one-to-one maps (for example, consider SLHC; $\slhc{\cdot}$ is a piecewise-smooth map whose generic fibers have co-dimension $(n-1)$ in the $\binom{n}{2}$-dimensional space $\metr{O}$, as seen from Equations \eqref{eq:slhc fiber} and \eqref{eq:inequalities for fiber}). Therefore, our statistical model is as follows:
\begin{compactitem}
	\item The ultra-metric $\bm{u}=cl_O(\bm{\theta})$ of a hierarchical clustering map $cl_O$ is a fixed unknown parameter to be estimated from a measurement $\bm{x}$ of a metric $\bm{\theta}$; thus, $\bm{\theta}$ is a nuisance parameter taking its values in the point pre-image $(cl_O)\inv({\bm{u}})$.
	\item The measurement $\bm{x}\in \weight{O}$ only depends on $\bm{\theta}$ through a specific probability distribution which we denote by $\pi(\bm{X}|\bm{\theta})$. 
	\item A reasonable assumption for this noise model is that the measurements $x_e$ of the values $\theta_e$, $e\in\binom{O}{2}$ of $\bm{\theta}$ are sampled independently from the same parametrized distribution $G_\theta(X)$, $\theta\in(0,+\infty)$:
\begin{equation}
	\pi(\bm{X}|\bm{\theta}) = \prod _{e\in{O\choose 2}} G_{\theta_{e}}(X_{e}).
\end{equation}
\end{compactitem}
These assumptions are justified, for example, in the context of WSNs as discussed in Section~\ref{section:introduction}.  The input data for clustering is provided in the form of independently measured distances between the sensors, obtained from Received Signal Strengths (RSSs), parametrized by the ground truth distances and corrupted by receiver noise as well as external effects such as multipath or shadowing. These kinds of distortions are often, and justifiably, modeled as noise~\cite{mao2007wireless,cox1984800,bernhardt1987macroscopic}. 

Construction of the likelihood $p(\bm{x}|\bm{u})$
from $\pi(\bm{x}|\bm{\theta)}$ and $\bm{u} = cl_O(\bm{\theta})$
is actually a problem of eliminating the nuisance parameter
$\bm{\theta}$. 
Hierarchical clustering maps are generally not one-to-one,
and this is the case for  $\slhc{\cdot}$.  Because of the uncertainty of
$\bm{\theta}$ relative  to $\bm{u}$, classical likelihood methods
for eliminating nuisance parameters (such as profile likelihood)
perform worse than the integrated likelihood proposed by Berger~\cite{berger1999integrated}. As pointed out there, going back to
papers of \cite{neyman1948consistent,cruddas1989time},  and as we have observed in our problem, profile
likelihood  can give rise to ``misleading behaviour''. 
Section~2 of the Berger \emph{et.al} paper argues in
favour of  integrated likelihood on the ``grounds of simplicity,
generality, sensitivity, and precision''. 
The integrated likelihood $\mathcal{L}(\bm{u};\bm{x})$ 
is constructed by integrating over $\bm{\theta}$ according to,
in the terms of Berger \emph{et al.}, ``the conditional prior
density'' of $\bm{\theta}$ given $\bm{u}$:
\begin{equation}\label{eq:inner_integral}
p(\bm{x}|{\bm{u}})
	=\int p(\bm{x}|\bm{\theta},\bm{u}) p(\bm{\theta}|{\bm{u}})\dd{\bm{\theta}}
	=\int \pi(\bm{x}|\bm{\theta}) p(\bm{\theta}|\bm{u})\dd{\bm{\theta}}
\end{equation} 
In the current context the support of $G_\theta$ is restrict to $(0,+\infty)$.

\subsection{Specializing to Single Linkage}
In the case of SLHC, our estimation model may be further decomposed through treating the MST of $\bm{\theta}$ (which is also an MST of the ultra-metric $\bm{u}$) as a discrete nuisance parameter associated with $\bm{u}$. Generically speaking, $\bm{u}$ has multiple MSTs, so any integration over the domain $\slhcinv{\bm{u}}\cap \metr{O}$ must decompose according to~\eqref{eq:slhc fiber}. Consequently, the weight (prior) $p(\bm{\theta}|\bm{u})$ decomposes as
\begin{equation} \label{eq:prior_decomposion_according_T}
	p(\bm{\theta}|\bm{u}) = \sum_{T\in \MST{\bm{u}}} p(T|\bm{u}) p(\bm{\theta}|T,\bm{u})\,,
\end{equation}
which itself may be seen as an integrated likelihood with respect to the discrete parameter $T$.

Substituting~\eqref{eq:prior_decomposion_according_T} into~\eqref{eq:inner_integral}, we obtain the likelihood:
\begin{equation}\label{eq:integrated_likelihood_for_u_slhc}
\begin{split}
	p(\bm{x}|{\bm{u}})
		&= \int_{\slhcinv{\bm{u}}\cap \metr{O}} \pi(\bm{x}|\bm{\theta})  \sum_{T\in \MST{\bm{u}}} p(T|\bm{u}) p(\bm{\theta}|T,\bm{u}) \dd{\bm{\theta}} \\
		&=\sum_{T\in\MST{\bm{u}}} p(T|\bm{u})\!\! \int\limits_{C(T,\bm{u})}\! \pi(\bm{x}|\bm{\theta})p(\bm{\theta}|T,\bm{u}) \dd{\bm{\theta}}.
\end{split}
\end{equation}
Figure~\ref{fig:concept_illustration_statistical_model}~is a schematic for the statistical model for estimating SLHC.
\begin{figure}[!t]
	\begin{center}
		\includegraphics[width=.8\columnwidth]{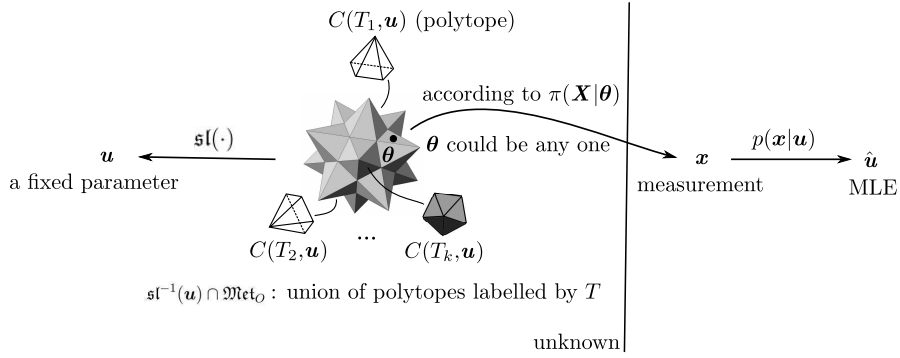}
		\caption{A schematic of the statistical model for estimating SLHC.}\label{fig:illustration_statistical_model}
		\label{fig:concept_illustration_statistical_model}
	\end{center}
\end{figure}

\subsubsection{``Least-Biased'' Distributions and Restricting to $\metr{O}$}\label{section:why uniform on fibers?} Below we argue that, in the absence of assumptions on the latent parameters $\bm{\theta}$, the weight distributions $p(\bm{\theta}|\bm{u})$ ought to be uniform over their supports. Hence:
\begin{equation}
	p(\bm{\theta}|\bm{u})=\left\{\begin{array}{cl}
		1\big{/}\mathrm{Vol}\left (\slhcinv{\bm{u}}\cap \metr{O}\right)
			&\text{ when } \bm{\theta} \in \slhcinv{\bm{u}}\cap \metr{O}\\
		0	&\text{ otherwise }
	\end{array}\right.
\end{equation}
Accordingly, $p(\bm{\theta}|T,\bm{u})$ is uniformly distributed on $C(T,\bm{u})$, and the $p(T|\bm{u})$ should be assigned according to their relative volumes. Unfortunately, $p(T|\bm{u})$ requires the computation of integrals over the polytopes $C(T,\bm{u})$ (see Section~\ref{subsubsec:Preimage_of_u} above). Existing techniques for these computations incur extremely high computational costs~\cite{barvinok1993computing,barvinok2010maximum,lasserre01laplace}.

The above assumption is justified by the following maximum entropy argument, the details of which lie outside the scope of this paper and will be detailed elsewhere. It follows from \eqref{eq:slhc fiber} that the weight $\total{T}{w}$ of $T\in\MST{w}$ is constant for $w\in\slhcinv{{u}}$ for a fixed ${u}\in\ultra{O}$. Moreover Corollary~\ref{cor:mst invariance under increasing functions} guarantees $\MST{g\circ w}=\MST{w}$ for all $w\in\metr{O}$ whenever $g:[0,\infty)\to[0,\infty)$ is {\it any} strictly increasing function, so that the quantity 
\begin{equation}\label{eq:energy functional}
	\energy{g}{w}:=\total{T}{g\circ w}\,,\qquad T\in\MST{w}
\end{equation}
is a reasonable choice of energy on the space $\metr{O}$, viewed as a space of micro-states of the `particles' in $O$. In turn, one can show that $\energy{g}{\cdot}$ gives rise to a piecewise-smooth maximum entropy parametric distribution
\begin{equation}\label{eq:maxent}
	p_g(w)\varpropto\mathtt{exp}\left(-\mu\energy{g}{w}\right)
\end{equation}
on $\metr{O}$, derived from the constraints
\begin{equation}\label{eq:energy constraint}
	\int_{\metr{O}}p_g(w)\dd{w}=1\,,\qquad
	\int_{\metr{O}}\energy{g}{w}p_g(w)\dd{w}=\mu.
\end{equation}
Clearly by \eqref{eq:slhc fiber}, this distribution is constant on $\slhcinv{{u}}\cap \metr{O}$ for all ${u}\in\ultra{O}$. 

Crucially, the above considerations also speak in favor of our self-imposed restriction to ground truth weights satisfying the triangle inequality. Observe that for each $T\in\spt{O}$, the level sets $\elevel{g}{r}$, $r\geq 0$, of the functional $\energy{g}{\cdot}$ only depend on the weights induced on $T$. This implies that, in $\weight{O}$, these level sets are unbounded. Consequently, since $p_g(\cdot)$ is a function of the energy alone, $p_g(\cdot)$ will not be normalizable under these circumstances. Restricting the possible values of $\theta$ to $\metr{O}$ solves this problem because the triangle inequality guarantees the compactness of level sets. Indeed, by Lemma~\ref{lemma:spanning tree decomposition}, any level set $\elevel{g}{r}$, $r\geq 0$, is a finite union, over all spanning trees $T\in\spt{O}$, of the closed subsets whose elements are {\em metrics} $w$ with $\total{T}{g\circ w}=\alpha$ and with $T\in\spt{O}$ fixed; by the triangle inequalities for $w$, no value of $w$ may exceed the total weight of $T$, implying that the level set is a finite union of closed bounded sets.

\subsubsection{Partial Profile Likelihood} 
The likelihood proposed in~\eqref{eq:integrated_likelihood_for_u_slhc} incurs a prohibitive computational cost, if precise computation is required. Leaving computationally feasible approximation methods of the full likelihood to future work, here, instead, we propose eliminating the discrete nuisance parameter $T\in\MST{\bm{u}}$ by forming the following profile likelihood:
\begin{equation}\label{eq:profile_likelihood_of_p_x_u}
	\mathcal{L}_{pr}(\bm{u};\bm{x}) = \underset{T\in \MST{\bm{u}}}{\text{max}}p(\bm{x}|T,{\bm{u}}) =  \underset{T\in \MST{\bm{u}}}{\text{max}} \int\limits_{C(T,\bm{u})}\! \pi(\bm{x}|\bm{\theta})p(\bm{\theta}|T,\bm{u}) \dd{\bm{\theta}}.
\end{equation}
For this notion of likelihood we have the following result:
\begin{proposition}\label{prop:MLE_u_profile_likelihood} Let $\alpha:\wspt{O}\to\ultra{O}$ be the map defined in Section~\ref{section:weighted trees}. Then, the maximum likelihood estimate of $\bm{u}$ derived from the profile likelihood of \eqref{eq:profile_likelihood_of_p_x_u} is given by
	\begin{equation}
	\hat{\bm{u}} = \treemap{
		\arg\underset{T^{\bm{u}} \in \wspt{O}}{\max}\; p(\bm{x}|T,{\bm{u}})
	}.
\end{equation}
\end{proposition}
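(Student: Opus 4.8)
The plan is to read the profile-likelihood MLE as a single joint maximization over a feasible set of pairs, and then to transport that maximization across a natural bijection between this set and the space $\wspt{O}$ of weighted spanning trees, the bijection being mediated by $\treemap{\cdot}$. Unwinding \eqref{eq:profile_likelihood_of_p_x_u}, the estimate is
\[
\hat{\bm{u}} \;=\; \arg\max_{\bm{u}\in\ultra{O}} \mathcal{L}_{pr}(\bm{u};\bm{x})
\;=\; \arg\max_{\bm{u}\in\ultra{O}}\;\max_{T\in\MST{\bm{u}}} p(\bm{x}\,|\,T,\bm{u}),
\]
and, since each $\MST{\bm{u}}$ is finite, the two nested maxima collapse into one over the feasible set $\mathcal{F}:=\set{(\bm{u},T)}{\bm{u}\in\ultra{O},\,T\in\MST{\bm{u}}}$.

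First I would set up the correspondence. Define $\Psi:\mathcal{F}\to\wspt{O}$ by $\Psi(\bm{u},T):=T^{\bm{u}}$, the tree $T$ carrying the restriction of $\bm{u}$ to its edges, and $\Phi:\wspt{O}\to\mathcal{F}$ by $\Phi(T^w):=(\treemap{T^w},T)$. Two facts make these well defined and mutually inverse. First, for $T\in\MST{\bm{u}}$ one has $\treemap{T^{\bm{u}}}=\slhc{\bm{u}}=\bm{u}$, because $\treemap{T^{\cdot}}$ agrees with $\slhc{\cdot}$ on any MST $T$ by construction, while $\slhc{\bm{u}}=\bm{u}$ since an ultra-metric is a fixed point of single linkage (immediate from \eqref{eq:slhc as sup}, or from idempotency in Proposition~\ref{prop:slhc characterization}); this yields $\Phi\circ\Psi=\mathrm{id}_{\mathcal{F}}$. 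Second, for every $T^w\in\wspt{O}$ the underlying tree $T$ is an MST of the ultra-metric it generates, $T\in\MST{\treemap{T^w}}$, so that $\Phi$ indeed takes values in $\mathcal{F}$: since $\treemap{T^w}$ agrees with $w$ on the edges of $T$, every non-tree edge $g=xy$ satisfies $\treemap{T^w}_g=\max_{f\in p(T)_{xy}}\treemap{T^w}_f$, i.e. $g$ is a maximum-weight edge of its fundamental cycle, which is exactly the cycle characterization of minimum spanning trees (and may alternatively be read off from Lemma~\ref{lemma:mst comparison} and Corollary~\ref{cor:mst invariance under increasing functions}). Since $\treemap{T^w}$ restricts to $w$ on $T$, this also gives $\Psi\circ\Phi=\mathrm{id}$, so $\Phi$ is a bijection.

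With the bijection in hand I would transport the objective. The quantity $p(\bm{x}\,|\,T,\bm{u})$ is a genuine function of the pair $(\bm{u},T)\in\mathcal{F}$, since its domain $C(T,\bm{u})$ (recall \eqref{eq:inequalities for fiber}) and the densities in \eqref{eq:profile_likelihood_of_p_x_u} depend only on $T$ and $\bm{u}$. Pulling it back along $\Psi$ and using $\treemap{T^{\bm{u}}}=\bm{u}$ turns it into the function $T^{\bm{u}}\mapsto p(\bm{x}\,|\,T,\bm{u})$ on $\wspt{O}$ appearing in the statement. Because $\Phi$ is a bijection,
\[
\max_{\bm{u}\in\ultra{O}}\;\max_{T\in\MST{\bm{u}}} p(\bm{x}\,|\,T,\bm{u})
\;=\;\max_{T^{\bm{u}}\in\wspt{O}} p(\bm{x}\,|\,T,\bm{u}),
\]
and the maximizing pair $(\hat{\bm{u}},\hat T)\in\mathcal{F}$ is the $\Phi$-image of the maximizing weighted tree $\hat T^{\bm{u}}\in\wspt{O}$; applying $\treemap{\cdot}$ to the latter returns $\hat{\bm{u}}=\treemap{\hat T^{\bm{u}}}$, which is precisely the asserted formula.

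The main obstacle is the second fact above: that every weighted spanning tree, with no constraint whatsoever on its weights, already realizes its underlying tree as an MST of the ultra-metric it induces. This is exactly what legitimizes replacing the constrained domain $\mathcal{F}$ by the ``free'' domain $\wspt{O}$ without either dropping feasible pairs or admitting infeasible ones, and it is where the cycle/cut theory of MSTs (or the monotonicity results of Section~\ref{section:weighted trees}) does the real work. A minor technical point to dispatch is that the statement presupposes the maxima are attained; this is cleanest read as an equality of $\arg\max$ sets under $\Phi$, and can be made literal by noting that the objective is continuous and that the search may be confined to a compact subregion, so that existence on one side is equivalent to existence on the other.
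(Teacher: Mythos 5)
Your proof is correct, but it takes a genuinely different route from the paper's. The paper disposes of this proposition in one sentence: it observes that the profile likelihood \eqref{eq:profile_likelihood_of_p_x_u} is precisely the ``modified likelihood'' in the sense of Kay's textbook, induced by the non-injective map $\alpha$ on the parameter space $\wspt{O}$, and then invokes the invariance property of maximum likelihood estimation proved there. What you have done, in effect, is prove that invariance property from scratch in this specific setting: your bijection $\Phi:\wspt{O}\to\mathcal{F}$, $\Phi(T^w)=(\treemap{T^w},T)$, is exactly the statement that the fibers of $\alpha$ are the sets $\set{T^{\bm{u}}}{T\in\MST{\bm{u}}}$, and this identification is what makes the profile likelihood over $\MST{\bm{u}}$ coincide with the modified likelihood induced by $\alpha$; transporting a joint maximization across a bijection is then the entire content of MLE invariance. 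Your version buys self-containedness, and it makes explicit a verification that the paper's citation glosses over --- namely that maximizing over the constrained pairs $(\bm{u},T)$ with $T\in\MST{\bm{u}}$ is the same as maximizing freely over all of $\wspt{O}$, which is false for an arbitrary parametrization but true here precisely because every weighted tree is an MST of the ultra-metric it generates. Two small corrections to your citations, neither affecting correctness: the key feasibility fact $T\in\MST{\treemap{T^w}}$ is not really a consequence of Lemma~\ref{lemma:mst comparison} or Corollary~\ref{cor:mst invariance under increasing functions} (those compare the MSTs of two weights and do not directly yield it); within the paper it is cleanest to quote Equation \eqref{eq:MST condition} of Proposition~\ref{prop:slhc and cones}: setting $u=\treemap{T^w}$ one has $T^u=T^w$, hence $\alpha(T^u)=u$, so $u\geq\alpha(T^u)$ holds trivially and $T\in\MST{u}$ follows. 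Similarly, your identity $\treemap{T^{\bm{u}}}=\bm{u}$ for $T\in\MST{\bm{u}}$ is part (a) of that same proposition. Finally, on attainment of the maxima: the clean statement is simply that $\arg\max$ sets correspond under a bijection, with no topology needed; the compactness remark you add is unnecessary (and unverified), and the paper itself does not address existence either.
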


\begin{proof} 
Equation \eqref{eq:profile_likelihood_of_p_x_u} is the modified likelihood defined in~\cite{Kay:1993:FSS:151045}, and the proposition follows from the invariance property of the maximum likelihood estimator proved there.
\end{proof}

Consider the two complementary projections of $\bm{\theta}\in C(T,{\bm{u}})$ denoted $\bm{\theta}^{\text{on}} = (\theta_e)_{e\in T}$ and $\bm{\theta}^{\text{off}} = (\theta_e)_{e\notin T}$, with the variable $\bm{X}$ split into $\bm{X}^{\text{on}}$ and $\bm{X}^{\text{off}}$ accordingly.

\begin{proposition}\label{prop:partial_likelihood} For any $T\in \MST{\bm{u}}$, $p(\bm{X}|T,{\bm{u}})$ splits as the product of two independent parts: 
\begin{equation} \label{eq:two_parts_likelihood}
p(\bm{X}|T,{\bm{u}}) = p(\bm{X}^{\text{on}}|T,{\bm{u}})p(\bm{X}^{\text{off}}|T,{\bm{u}})\,,
\end{equation} 
where:
\begin{equation} \label{eq:partial_likelihood_in_proof}
\left \{
\begin{aligned}
& p(\bm{X}^{\text{on}}|T,{\bm{u}}) = \prod_{e\in T} G_{u_e}(X_e)\,,\\
& p(\bm{X}^{\text{off}}|T,{\bm{u}}) = \int_{C(T,{\bm{u}})} \prod_{e\notin T}G_{\theta_e}(X_e)p(\bm{\theta}^{\text{off}}|T,{\bm{u}})\dd{\bm{\theta}^{\text{off}}}\,.\\
\end{aligned}
\right.
\end{equation}
\end{proposition}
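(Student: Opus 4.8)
The plan is to start from the inner integrand of \eqref{eq:integrated_likelihood_for_u_slhc}, namely $p(\bm{X}|T,\bm{u})=\int_{C(T,\bm{u})}\pi(\bm{x}|\bm{\theta})\,p(\bm{\theta}|T,\bm{u})\,\dd{\bm{\theta}}$, and to exploit the explicit description \eqref{eq:inequalities for fiber} of the integration domain together with the product form $\pi(\bm{x}|\bm{\theta})=\prod_{e\in\binom{O}{2}}G_{\theta_e}(X_e)$. The first move is to split this product along the tree $T$ into $\prod_{e\in T}G_{\theta_e}(X_e)\cdot\prod_{e\notin T}G_{\theta_e}(X_e)$, which matches the decomposition of $\bm{\theta}$ into $\bm{\theta}^{\text{on}}$ and $\bm{\theta}^{\text{off}}$ and of $\bm{X}$ into $\bm{X}^{\text{on}}$ and $\bm{X}^{\text{off}}$.

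The crux is to show that the on-tree coordinates are pinned down on the fiber. By \eqref{eq:inequalities for fiber}, every $\bm{\theta}\in C(T,\bm{u})$ satisfies $u\leq\bm{\theta}\leq\omega(T^u)$; but for an edge $e=xy\in T$ the tree-path $p(T)_{xy}$ reduces to the single edge $e$, so that $\omega(T^u)_e=u_e$. The two-sided inequality therefore forces $\theta_e=u_e$ for every $e\in T$, i.e. $\bm{\theta}^{\text{on}}=(u_e)_{e\in T}$ is constant over the entire polytope $C(T,\bm{u})$. This is the step I expect to carry all the weight: it is precisely where the geometry of the SLHC fibers enters, and it is what makes the on-tree factor independent of the integration variable. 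Everything else is bookkeeping once this collapse of the on-tree slice is established.

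Granting this, the factor $\prod_{e\in T}G_{\theta_e}(X_e)=\prod_{e\in T}G_{u_e}(X_e)$ is constant on $C(T,\bm{u})$ and may be pulled out of the integral, yielding the first line of \eqref{eq:partial_likelihood_in_proof}. What remains inside is $\int_{C(T,\bm{u})}\prod_{e\notin T}G_{\theta_e}(X_e)\,p(\bm{\theta}|T,\bm{u})\,\dd{\bm{\theta}}$; since $\bm{\theta}^{\text{on}}$ is deterministic on the domain, the conditional prior $p(\bm{\theta}|T,\bm{u})$ is supported on the off-tree slice and collapses to its marginal $p(\bm{\theta}^{\text{off}}|T,\bm{u})$, giving the second line of \eqref{eq:partial_likelihood_in_proof}. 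The resulting product of the two factors is exactly \eqref{eq:two_parts_likelihood}, and this factorization is the asserted independence: conditionally on $(T,\bm{u})$ the law of $\bm{X}^{\text{on}}$ depends only on the fixed $\bm{\theta}^{\text{on}}=(u_e)_{e\in T}$ and hence not on $\bm{\theta}^{\text{off}}$, so the independence of the measurement noise across edges built into the product form of $\pi$ survives the marginalization over $\bm{\theta}^{\text{off}}$.
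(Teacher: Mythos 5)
Your proposal is correct and takes essentially the same route as the paper's proof in Appendix~\ref{append:proof_of_independence_partial}: both split $\pi(\bm{x}|\bm{\theta})=\prod_{e}G_{\theta_e}(X_e)$ along $T$ and rest on the fact that $\bm{\theta}^{\text{on}}$ is pinned to $(u_e)_{e\in T}$ on $C(T,\bm{u})$, the paper encoding this as the Dirac prior $p(\bm{\theta}^{\text{on}}|T,\bm{u})=\prod_{e\in T}\delta(\theta_e-u_e)$ and verifying independence by computing the ratio $p(\bm{X}^{\text{off}}|\bm{X}^{\text{on}},T,\bm{u})=p(\bm{X}|T,\bm{u})/p(\bm{X}^{\text{on}}|T,\bm{u})$ and observing it is free of $\bm{X}^{\text{on}}$, while you pull the constant on-tree factor out of the integral and invoke the standard factorization criterion. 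The only real difference is expository and to your credit: you derive the pinning $\theta_e=u_e$ for $e\in T$ explicitly from the fiber geometry \eqref{eq:inequalities for fiber} (via $\omega(T^u)_e=u_e$ on tree edges), a step the paper simply asserts.
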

\begin{proof} See Appendix~\ref{append:proof_of_independence_partial}.
\end{proof}

The product decomposition above raises the question of how much hierarchical information is conveyed {\em only} by the distance measurements along the edges of a chosen tree. One expects SLHC to correspond to the best estimate in this case. Indeed, the main result of this section characterizes SLHC as a maximum {\em partial} profile likelihood estimator (MPPLE) of hierarchical clustering for a significant class of measurement models. We define the partial profile likelihood~\cite{cox1975partial} of $\bm{u}$ given a single measurement $\bm{x}$ to be:
\begin{equation} \label{eq:definition_of_partial_likelihood}
\mathcal{L}_{pp} ({\bm{u}};\bm{x}) =  \underset{T\in \MST{\bm{u}}}{\text{max}} p(\bm{x}^{\text{on}}|T,{\bm{u}}).
\end{equation}
The resulting characterization is as follows:
\begin{theorem}\label{thm:slhc_maxi_PLE}
Given the measurement model $G_\theta$, let $g(x)=G_{\hat{\theta}(x)}(x)$ where $\hat{\theta}(x)$ is the MLE of $\theta$ given one measurement $x$. Denote $\bar{\bm{u}} = (\bar{\tau},\bar{a}) = \slhc{\bm{x}}$, $\hat{\bm{u}} = (\hat{\tau},\hat{a})$ as the MPPLE, then:
\begin{enumerate}
\item If $\hat{\theta}$ is strictly increasing and $g$ is strictly decreasing, then $\bar{\tau}=\hat{\tau}$.
\item If, in addition, $\hat{\theta}(x) = x$ holds for all $x$, then $\bar{\bm{u}} = \hat{\bm{u}}$.
\end{enumerate}
\end{theorem}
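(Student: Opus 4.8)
The plan is to recast the maximisation of the partial profile likelihood $\mathcal{L}_{pp}(\bm{u};\bm{x})$ over $\ultra{O}$ as a free optimisation over the space of weighted spanning trees $\wspt{O}$, and then to exploit the combinatorial invariance of minimum spanning trees under monotone reparametrisation (Corollary~\ref{cor:mst invariance under increasing functions}). First I would observe that for any weighted tree $T^w\in\wspt{O}$ the ultra-metric $u=\treemap{T^w}$ satisfies $T\in\MST{u}$ — the cycle property holds because $\treemap{T^w}_{xy}=\max\{w_e\mid e\in p(T)_{xy}\}$ dominates every tree edge on the cycle — and $u$ restricts to $w$ on the edges of $T$, while conversely every pair $(\bm{u},T)$ with $T\in\MST{\bm{u}}$ arises this way. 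Combining this with Proposition~\ref{prop:partial_likelihood}, which gives $p(\bm{x}^{\text{on}}|T,\bm{u})=\prod_{e\in T}G_{u_e}(x_e)$, identifies the two maxima:
\[
\max_{\bm{u}\in\ultra{O}}\mathcal{L}_{pp}(\bm{u};\bm{x})
= \max_{T^w\in\wspt{O}}\prod_{e\in T}G_{w_e}(x_e).
\]

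Next I would factor the right-hand product over the edges of a fixed tree $T$. Since the on-tree weights $w_e$ are free and mutually independent (the ultra-metric constraint being automatically satisfied by $\treemap{\cdot}$), the inner maximisation separates edgewise, giving $\max_{w_e}G_{w_e}(x_e)=G_{\hat{\theta}(x_e)}(x_e)=g(x_e)$, attained at $w_e=\hat{\theta}(x_e)$. Hence the MPPLE-optimal weighted tree carries on-tree weights $\hat{\theta}(x_e)$ and a combinatorial tree $\hat{T}$ solving $\max_{T\in\spt{O}}\prod_{e\in T}g(x_e)$. Taking logarithms (using $g>0$), this is equivalent to minimising $\sum_{e\in T}\bigl(-\log g(x_e)\bigr)$; as $g$ is strictly decreasing, $\phi:=-\log g$ is strictly increasing, so the minimisers are exactly $\MST{\phi\circ\bm{x}}$, which equals $\MST{\bm{x}}$ by Corollary~\ref{cor:mst invariance under increasing functions}. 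Thus $\hat{T}\in\MST{\bm{x}}$ and $\hat{\bm{u}}=\treemap{\hat{T}^{\hat{\theta}\circ\bm{x}}}$.

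The comparison with $\bar{\bm{u}}=\slhc{\bm{x}}=\treemap{\hat{T}^{\bm{x}}}$ — legitimate since $\hat{T}\in\MST{\bm{x}}$ and SLHC is independent of the chosen MST — is then immediate. Because $\hat{\theta}$ is monotone it commutes with the maxima defining $\treemap{\cdot}$, so entrywise $\hat{\bm{u}}_{xy}=\max\{\hat{\theta}(x_e)\mid e\in p(\hat{T})_{xy}\}=\hat{\theta}\bigl(\max\{x_e\mid e\in p(\hat{T})_{xy}\}\bigr)=\hat{\theta}(\bar{\bm{u}}_{xy})$; that is, $\hat{\bm{u}}=\hat{\theta}\circ\bar{\bm{u}}$. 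For part (1), strict monotonicity of $\hat{\theta}$ preserves the relative ordering of ultra-metric values and hence the chain of partitions, giving $\hat{\tau}=\bar{\tau}$. For part (2), $\hat{\theta}=\mathrm{id}$ forces $\hat{\bm{u}}=\bar{\bm{u}}$ outright.

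The main obstacle is the first step: justifying the reduction from an optimisation over $\ultra{O}$ (with a hidden discrete maximisation over $\MST{\bm{u}}$) to a single free optimisation over $\wspt{O}$. This rests on the surjectivity and consistency of the correspondence $(\bm{u},T)\leftrightarrow T^w$ together with the fact, established via the cycle characterisation, that $T\in\MST{\treemap{T^w}}$ for \emph{every} weight $w$, so that no hidden constraints couple the on-tree weights. A secondary point to handle carefully is that the dendrogram structure $\tau$ is determined purely by the relative ordering of ultra-metric values, so that strict monotonicity of $\hat{\theta}$ (rather than mere injectivity on the realised values) is exactly what is needed, and that ties in $\bm{x}$ do not disrupt the well-definedness of either $\bar{\bm{u}}$ or $\hat{\bm{u}}$.
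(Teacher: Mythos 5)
Your proof is correct and follows essentially the same route as the paper's: reduce the MPPLE to an optimisation over $\wspt{O}$, separate the on-tree maximisation edgewise to get $\hat{u}_e=\hat{\theta}(x_e)$, invoke Corollary~\ref{cor:mst invariance under increasing functions} with the strict monotonicity of $g$ to identify the optimal tree with an element of $\MST{\bm{x}}$, and use the strict monotonicity of $\hat{\theta}$ to match the dendrogram structures. Your two refinements --- justifying the passage to $\wspt{O}$ by the explicit correspondence $(\bm{u},T)\leftrightarrow T^w$ rather than citing Proposition~\ref{prop:MLE_u_profile_likelihood}, and packaging the conclusion as the identity $\hat{\bm{u}}=\hat{\theta}\circ\bar{\bm{u}}$ --- are welcome sharpenings of the paper's argument, not a different method.
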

This theorem provides evidence that, subject to the above monotonicity requirements of the measurement model, the full maximum likelihood estimator of $\slhc{\bm{\theta}}$ should display improved performance over that of $\slhc{\bm{x}}$, which is now characterized as the partial profile likelihood estimator obtained by discarding the measurements that do not correspond to edges on the MST of $\bm{x}$ (save, of course, for order information which was used to obtain the MST). We now prove the theorem.
\begin{proof} From \eqref{eq:partial_likelihood_in_proof}, \eqref{eq:definition_of_partial_likelihood} and since $u_e = \theta_e$ for $e\in T$, the log-likelihood is
\begin{equation}\label{eq:log_partial_likeli_hood}
	\log\,\mathcal{L}_{pp}(\bm{u};\bm{x}) = \underset{T\in \MST{\bm{u}}}{\text{max}}  \sum _{e\in T} \log G_{u_e}(x_e).
\end{equation}
It follows from Proposition~\ref{prop:MLE_u_profile_likelihood} that $\hat{\bm{u}} = \alpha(\widehat{T^{\bm{u}}})$, where $\widehat{T^{\bm{u}}}$ is the MLE of weighted spanning tree from 
\begin{equation} \label{eq:mle_of_the_weighted_spanning_tree}
	\widehat{T^{\bm{u}}} = \arg\underset{T^{\bm{u}} \in \wspt{O}}{\max} \sum _{e\in T} \log G_{u_e}(x_e).
\end{equation}
Since the variables are independent, summation in~\eqref{eq:mle_of_the_weighted_spanning_tree} is maximized if and only if each of the summands is maximized. Thus we get the MLE weight  of a given spanning tree $T$: $\hat{u}_e = \hat{\theta}_e$ for all $e\in T$. Then the maximum likelihood MST $\widehat{T}$ is
\begin{equation} \label{eq:mpple_mst_t}
	\widehat{T}  =  \arg \underset{T\in \spt{O}}{\max}  \sum _{e\in T}\log g(x_e).
\end{equation}

Since the function $x\mapsto\log g(x)$ is a strictly decreasing function of $x$, Corollary~\ref{cor:mst invariance under increasing functions} implies that $T$ is an MST of $\bm{x}$ if and only if $T$ is also a {\it maximum} spanning tree of the weight $\log g(\bm{x})$, implying that the total weight, $ \sum _{e\in T}\log g(x_e)$, along this tree is the maximum possible over all spanning trees. So far we have shown that the estimated MST coincides with an MST of the measurement, that is: $\widehat{T}\in \MST{\bm{x}}$.

In order for the dendrogram structures obtained from these estimated (weighted) trees through the map $\alpha$ to coincide, it suffices that the estimated weights along these trees be ordered in the same way. Indeed, this is true because $\hat\theta$ is a strictly increasing function.

Finally, to prove the second assertion, we observe that for $e\in \widehat{T}$ we have $\hat{u}_e=\hat{\theta}_e=x_e$. Since $\widehat{T}$ is an MST of $\bm{x}$, this implies that $\hat{\bm{u}} = \alpha(\widehat{T^{\bm{u}}})$ coincides with $\bm{u}=\slhc{\bm{x}}$.
\end{proof}

\subsubsection{Exponential Families}\label{section:exponential family case} 
We now obtain a sufficient condition for Theorem~\ref{thm:slhc_maxi_PLE} to apply in the case that the noise model, $G_\theta$ is a single parameter exponential family~\cite{letac1992lectures} ; that is,
we assume 
\begin{equation}\label{eq:exponential_family_general_form}
G_{\theta}(X) = p(X|\theta) = h(X)\text{exp}\left \{ \left(C(\theta)T(X) - A(\theta)\right)\right \}.
\end{equation}
We assume too, as is customary, that $C(\theta)$ is a $1-1$ function, so that the exponential family \eqref{eq:exponential_family_general_form} can be transformed to canonical form:
\begin{equation}
	p(X|\eta) = h(X) \text{exp}\left \{ \left(\eta T(X) - B(\eta) \right)\right \}
\end{equation}
where $\eta$ and $\theta$ are related by $\eta=C(\theta)$, and $A(\theta)=B(C(\theta))$ and, consequently, $ \tfrac{\text{d}B}{\text{d}\eta} =\frac{\text{d}A}{\text{d}\theta} \left( \frac{\text{d}C}{\text{d}\theta} \right)^{-1}$.

In the continuously differentiable case the MLE $\hat{\eta}(x)$ of $\eta$ given a measurement $x$ is a solution of the equation $\frac{\text{d}B}{\text{d}\eta} = T(x)$~(see e.g.~\cite{letac1992lectures}). The $1-1$ property yields that the MLE for $\eta$ satisfies $\hat \eta(x)=C(\hat \theta(x))$, where $\hat \theta$ is the MLE for $\theta$, and satisfies 
\begin{equation}\label{eq:mle_exponential_family_theta}
	\frac{\text{d}A}{\text{d}\theta} \left( \frac{\text{d}C}{\text{d}\theta} \right)^{-1} \bigg|_{\hat{\theta}}= T(x).
\end{equation}

For $g(x) = \text{log}p(x|\hat{\theta}) = \text{log}h(x) + C(\hat{\theta})T(x) - A(\hat{\theta})$, we obtain
\begin{equation} \label{eq:conditon_for_exponential_negtive}
		\frac{\text{d} g}{\text{d}x} =  \frac{\text{d} \text{ log}h}{\text{d}x}  + C(\hat{\theta})\cdot \frac{\text{d}T}{\text{d}x}
\end{equation}
using $\frac{\text{d}C}{\text{d}\theta}\big|_{\hat{\theta}} \cdot T(x) = \frac{\text{d}A}{\text{d}\theta}\big|_{\hat{\theta}}$.

According to the hypotheses, $\hat\theta(x)$ is strictly increasing and $\frac{\text{d} g}{\text{d}x}<0$, so that the structure of $\slhc{\bm{x}}$ is the MPPLE of the dendrogram structure. If, in addition,   $\hat{\theta}(x) = x$ holds for all $x$, then $\slhc{\bm{x}}$ coincides with the MPPLE of $\bm{u}$.

\subsubsection{Example based on the log-normal distribution}
\label{sec:An_example_based_on_log-normal_distribution}
We assume that each $X$ follows a log-normal distribution $\text{ln}\mathcal{N}(\mu,\sigma)$ with identical variance parameter $\sigma$.  
We model the relationship between the measurement and the true metric $\bm{\theta}$ as follows: we require $\mu_e = \text{ln}\theta_e$ and $\sigma_e = \sigma$ for all $e\in{O\choose 2}$, 
\begin{equation}\label{eq:data_model_probability}
	G_{\theta_e}(X_e)=
		\frac{1}{\sigma X_e \sqrt{2\pi}}
		\text{exp}\left(
			-\frac{(\text{ln}X_e - \text{ln}\theta_e)^2}{2\sigma ^2}
		\right).
\end{equation}

This is an exponential family with $h(X_e) = \frac{e^{-\frac{\text{ln}^2 X_e}{2\sigma^2}}}{\sqrt{2\pi}\sigma X_e}$, $T(X_e) =  \text{ln}X_e$, $C(\theta_e) = \frac{\text{ln}\theta_e}{\sigma^2}$ and $A(\theta_e) = \frac{\text{ln}^2\theta_e}{2\sigma^2}$. In this context, 
\begin{equation}
	\frac{\text{d}A(\theta_e)}{\text{d}\theta_e} \left( \frac{\text{d}C(\theta_e)}{\text{d}\theta_e} \right)^{-1}  = 
	\frac{\text{ln}\theta_e}{\sigma^2\theta_e} \left(\frac{1}{\sigma^2\theta_e}\right )^{-1} = \text{ln}\theta_e .
\end{equation}
Given a measurement $x_e$, we have $\hat{\theta}_e(x_e) = x_e$ which is strictly increasing. It follows that
\begin{equation}
	\frac{\text{d} \text{ log}h(x_e)}{\text{d}x_e}  + C(\hat{\theta}_e(x_e))\frac{\text{d}T(x_e)}{\text{d}x_e} = -\frac{\text{ln}x_e}{\sigma^2 x_e} 
	-\frac{1}{x_e} + \frac{\text{ln}x_e}{\sigma^2}\frac{1}{x_e} = -\frac{1}{x_e} < 0 .
\end{equation}
For this model, then, the MPPLE is just the SLHC. Note that both conditions of Theorem~\ref{thm:slhc_maxi_PLE} are satisfied in this example irrespective of the value of $\sigma$. In other words, estimating the parameter $\sigma$ is redundant for finding the MPPLE.

To illustrate this, we provide a simulation: 
\begin{itemize}
	\item[] \textbf{Step 1}. Samples of weights for 5 points are generated from the uniform distribution over the space $(0,100)^{5\choose{2}}$ until one is found that satisfies the triangle inequality. 
	\item[] \textbf{Step 2}. For a specific value of $\sigma$, a sample of $\bm{x}$ is generated according to \eqref{eq:data_model_probability}.
	\item[] \textbf{Step 3}. The values of $\bar{\bm{u}}=\slhc{\bm{x}}$, the MPPLE $\hat{\bm{u}}$, and the true value of $\bm{u} = \slhc{\bm{\theta}}$ are calculated. 
\end{itemize}

To quantify the error in this simulation, we use the $\ell_1$ distance (see~\cite{boorman1973metrics})
\begin{equation}
d(\bm{u}',\bm{u}) = \sum_{e\in{O\choose 2}}|u_e' - u_e|.
\end{equation}

The standard deviation $\sigma$ is decremented from $e^{0}$ to $e^{-8}$ in steps of $e^{-0.2}$. At each value of $\sigma$ , Steps 1-3 are repeated 10000 times, and incorrect  dendrogram structures and mean errors are calculated. Plots of the proportions of incorrect dendrogram structures and mean errors are given in Figures~\ref{fig:CorrectRatio} and~\ref{fig:RMSERROR} respectively.  As is readily seen, the results for SLHC and MPPLE are identical. 

Ideally in the simulation we would like  to sample
              general metrics from metric cones consistent with our
              model description  and theoretically analysis. However,
              randomly selecting a large number of samples from a metric cone is
              computationally unrealistic because of the problem of checking triangle
              inequalities to determine whether a given random weight
              is in fact a metric. It would be much easier, for
              instance, to select samples from a specific metric such as
              the Euclidean metric or the $\ell_\infty$ metric.    We have chosen, in this paper, to
              perform our simulations with a small number of data points (5
              actually) because  1),  the simulation is only an
              illustration of our rigorously proved  theoretical
              result,  2) calculation for a significantly larger
              number of points would be be computationally intensive
              as we have stated above; 3) a small number of data
              points enables easier comprehension of the results.

\begin{figure}%
\begin{center}
\parbox{0.45\columnwidth}{%
\includegraphics[width=0.4\columnwidth]{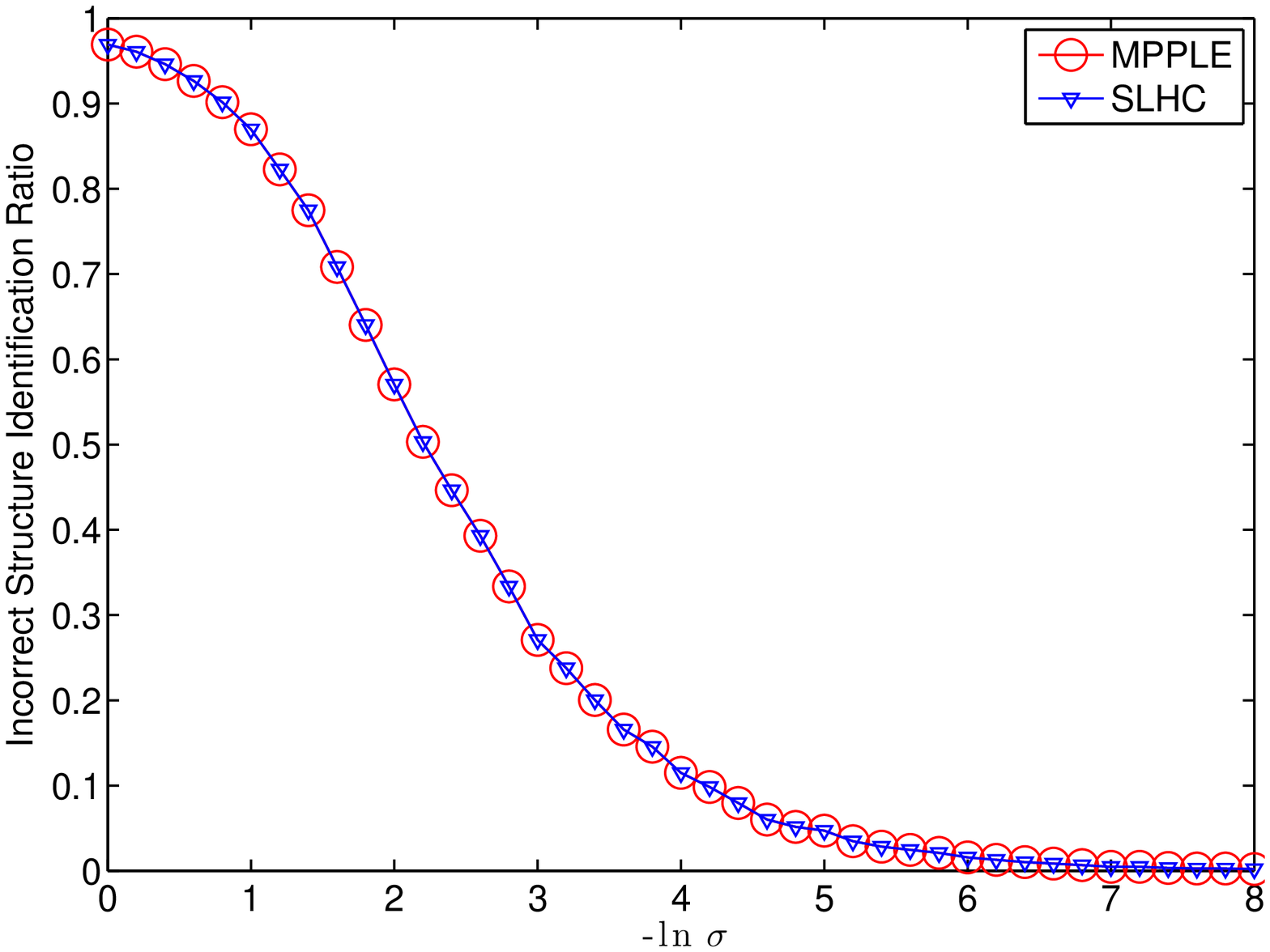}
\caption{Incorrect structure identification ratio versus $\sigma$.}%
\label{fig:CorrectRatio}}%
\qquad
\parbox{0.45\columnwidth}{%
\includegraphics[width=0.4\columnwidth]{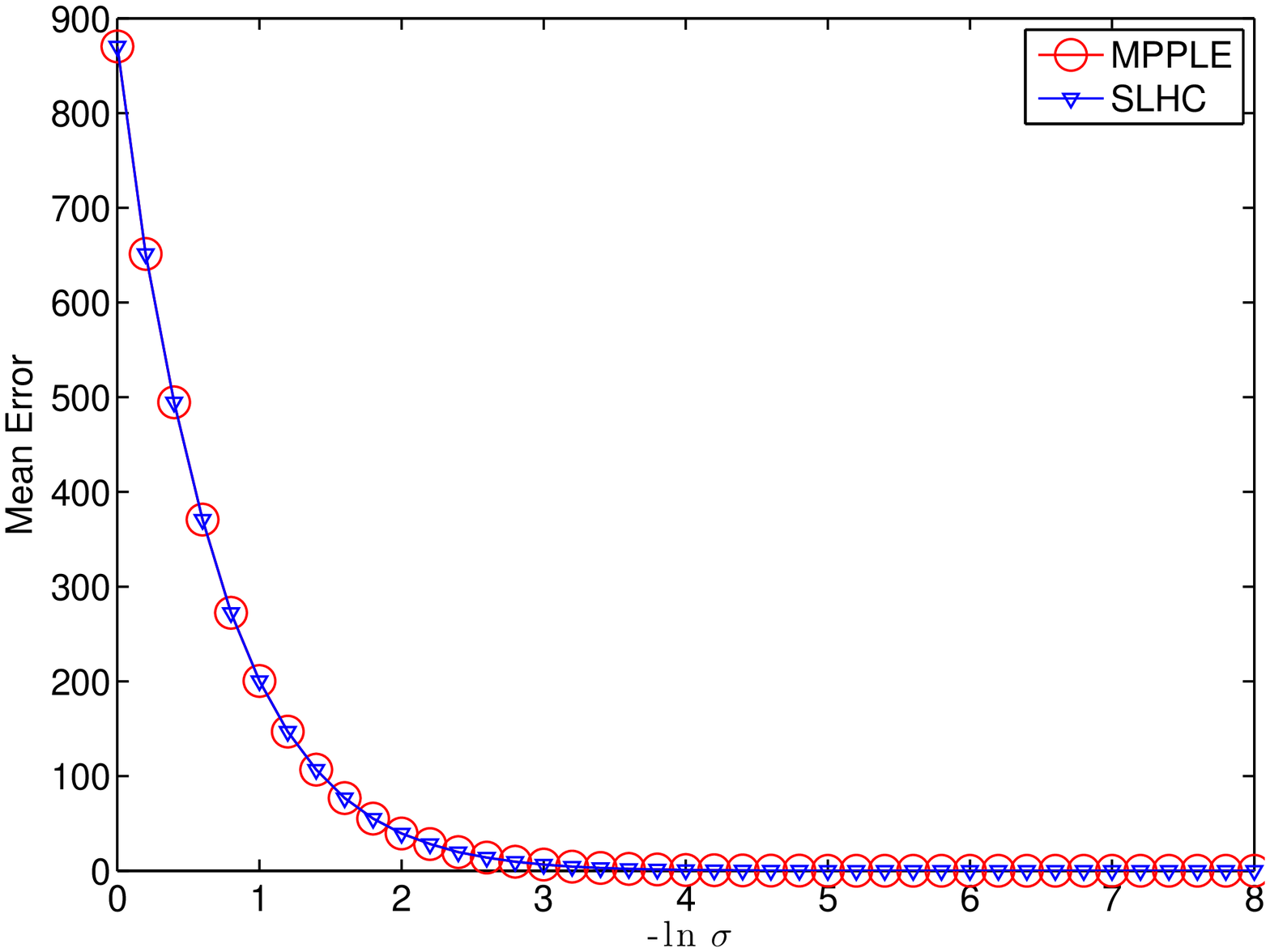}
\caption{Mean error of the ultra-metric versus $\sigma$.}%
\centering
\label{fig:RMSERROR}}%
\end{center}
\end{figure}%

\section{Consistency of SLHC}\label{section:consistency}
In this section we demonstrate the consistency of SLHC {\it as an estimator}, that is: we consider the consistency of 
\begin{equation}
	\bar{\bm{u}}(\bm{x}):=\slhc{\bm{x}}
\end{equation}
as an estimator of $\slhc{\bm{\theta}}$. Prior to the discussion, we require a technical lemma:
\begin{lemma}\label{lemma:continuity of alpha}
The map $\alpha:\wspt{O}\to\ultra{O}$ defined in Equation~\eqref{eqn:ultra-metric from a tree} is continuous with respect to the standard topology on $\ultra{O}$ induced from $\RR^{O\choose 2}$.
\end{lemma}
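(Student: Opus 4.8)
The plan is to exploit the decomposition of $\wspt{O}$ into clopen connected components noted earlier. Since $\wspt{O}$ is the disjoint union of $\card{\spt{O}}$ clopen pieces, one for each $T\in\spt{O}$, each homeomorphic to $\RRplus{n-1}$, a map out of $\wspt{O}$ is continuous if and only if its restriction to each such piece is continuous. Hence it suffices to fix a spanning tree $T$ and prove that $w\mapsto\treemap{T^w}$ is continuous on the component indexed by $T$, which we identify with $\RRplus{n-1}$ via the weights on the edges of $T$.

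On this component the crucial observation is that the edge path $p(T)_{xy}$ joining $x$ to $y$ depends only on the combinatorial tree $T$ and not on the weight $w$; it is therefore a fixed finite subset of the edges of $T$ as $w$ ranges over the component. Consequently, for each $xy\in\binom{O}{2}$ the coordinate function $w\mapsto\treemap{T^w}_{xy}=\max\left\{w_e\,\left|\,e\in p(T)_{xy}\right.\right\}$ from \eqref{eqn:ultra-metric from a tree} is a maximum of finitely many of the coordinate projections $w\mapsto w_e$. Each such projection is continuous, and a pointwise maximum of finitely many continuous real-valued functions is continuous; so every coordinate of $\alpha$ is continuous on the component.

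Since $\ultra{O}$ carries the subspace topology inherited from $\RR^{O\choose 2}$, continuity of each coordinate of the restricted map yields continuity of $w\mapsto\treemap{T^w}$ as a map into $\ultra{O}$. Letting $T$ range over $\spt{O}$ and reassembling the clopen pieces gives continuity of $\alpha$ on all of $\wspt{O}$. I do not expect any serious obstacle here: the lemma is elementary, and the only point requiring genuine care is the one highlighted above, namely that the index set $p(T)_{xy}$ over which the maximum is taken is constant on each component (rather than varying with $w$). Once this local constancy is recognized, the argument collapses to the standard fact that a finite maximum of continuous functions is continuous.
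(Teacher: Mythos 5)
Your proof is correct and is essentially identical to the paper's own argument: both reduce to the clopen components $K_T\cong\RRplus{n-1}$ of $\wspt{O}$ and observe that each coordinate $w\mapsto\max_{e\in p(T)_{xy}}w_e$ of $\alpha$ is a finite maximum of continuous coordinate projections, hence continuous. Your explicit remark that the index set $p(T)_{xy}$ is constant in $w$ on each component is implicit in the paper's proof but is the same observation, so there is no substantive difference.
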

\begin{proof} First, since the connected components of $\wspt{O}$ are open, it suffices to verify the continuity of $\alpha$ when restricted to the component $K_T$ corresponding to an arbitrarily chosen tree $T\in\spt{O}$.

For a fixed $T$, $K_T$ is homeomorphic to the subspace of non-negative vectors in $\RR^{n-1}$, written as $(w_e)_{e\in T}$, with the standard topology. For each fixed $xy\in\binom{O}{2}$, the $xy$-coordinate of $\alpha(T^w)$, equal to $\max_{e\in p(T)_{xy}}w_e$, is the maximum of a finite collection of continuous real-valued functions of the vector $(w_e)_{e\in T}$. It follows that $\alpha$ is continuous over $K_T$, as desired.
\end{proof} 

Consistency is a characteristic of the limiting performance of an estimator as the number of samples tends to infinity. Since SLHC does not process multiple samples at a time, a natural way to take account of information contained in a sequence $\left(\bm{x}(1),\ldots,\bm{x}(N),\ldots\right)$ of measurements of $\bm{\theta}$ is as follows: for each edge $e\in{O\choose 2}$ obtain the MLE estimate $\bar\theta_e(N)$ of $\theta_e$ from the set $\{x_e(1),\cdots,x_e(N)\}$ to form the weight $\bm{\bar\theta}_N$, and define
\begin{equation}
	\bar{\bm{u}}_N\big(\bm{x}(1),\ldots,\bm{x}(N)\big):=\slhc{\bm{\bar\theta}_N}
\end{equation}

\begin{theorem}\label{thm:consistency_of_SLHC}
SLHC is a consistent estimator, that is: $\bar{\bm{u}}_N$ converges in probability to $\bm{u}$.
\end{theorem}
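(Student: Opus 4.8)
The plan is to reduce the assertion to two essentially independent facts: the classical consistency of the per-edge maximum likelihood estimator, and the continuity of the deterministic single linkage map $\slhc{\cdot}\colon\weight{O}\to\ultra{O}$ at the ground truth. These are then combined through the continuous mapping principle, which handles the ``in probability'' bookkeeping. First I would record that, under the standard regularity hypotheses on the one-parameter family $G_\theta$ (identifiability together with smoothness of the log-likelihood, satisfied in particular by the exponential families of Section~\ref{section:exponential family case}), the MLE $\bar\theta_e(N)$ computed from the i.i.d.\ sample $\{x_e(1),\ldots,x_e(N)\}$ is a consistent estimator of $\theta_e$; that is, $\bar\theta_e(N)\to\theta_e$ in probability for each fixed edge $e\in\binom{O}{2}$. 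Since $\binom{O}{2}$ is finite, a union bound upgrades this coordinatewise convergence to joint convergence of the random vector $\bm{\bar\theta}_N$ to $\bm{\theta}$ in probability in $\RR^{O\choose 2}$.

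The heart of the argument is to establish continuity of $\slhc{\cdot}$ at $\bm{\theta}$, and this is exactly where Lemma~\ref{lemma:continuity of alpha} is needed. The one difficulty is that $\bm{\theta}$ need not be generic, so it may admit several MSTs and the tree-selection map is genuinely discontinuous there. To circumvent this I would fix an arbitrary sequence $d_k\to\bm{\theta}$ and, for each $k$, choose $T_k\in\MST{d_k}$. As $\spt{O}$ is finite, along a subsequence $T_k\equiv T$ is constant; since the total weight $\total{\cdot}{d_k}$ is continuous and $\total{T}{d_k}\le\total{S}{d_k}$ for every spanning tree $S$, passing to the limit gives $\total{T}{\bm{\theta}}\le\total{S}{\bm{\theta}}$, whence $T\in\MST{\bm{\theta}}$. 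Because the value $\slhc{w}=\alpha(T^{w})$ is independent of the chosen MST $T\in\MST{w}$, Lemma~\ref{lemma:continuity of alpha} then yields $\slhc{d_k}=\alpha(T^{d_k})\to\alpha(T^{\bm{\theta}})=\slhc{\bm{\theta}}$ along this subsequence. As the candidate limit $\slhc{\bm{\theta}}$ is the same regardless of the subsequence extracted, the full sequence converges, which is the desired continuity. (As a cleaner alternative one may verify directly from~\eqref{eq:slhc as sup} and the monotonicity property~(c) of Proposition~\ref{prop:slhc characterization} that $\slhc{\cdot}$ is $1$-Lipschitz for the $\ell_\infty$ norm, using that adding a constant to all edges leaves the MST unchanged and commutes with $\slhc{\cdot}$.)

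Finally I would conclude. Since $\bm{\theta}$ is deterministic, continuity of $\slhc{\cdot}$ at the single point $\bm{\theta}$ already suffices: given $\epsilon>0$ pick $\delta>0$ with $\total{d-\bm{\theta}}{\infty}<\delta\Rightarrow\total{\slhc{d}-\slhc{\bm{\theta}}}{\infty}<\epsilon$, so that the event $\{\total{\bar{\bm{u}}_N-\bm{u}}{\infty}\ge\epsilon\}$ is contained in $\{\total{\bm{\bar\theta}_N-\bm{\theta}}{\infty}\ge\delta\}$, whose probability tends to $0$ by the first step. Hence $\bar{\bm{u}}_N=\slhc{\bm{\bar\theta}_N}\to\slhc{\bm{\theta}}=\bm{u}$ in probability, as claimed. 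I expect the main obstacle to be precisely the continuity step at a non-generic $\bm{\theta}$: one must rule out that the wandering of the selected MST among the several trees of $\MST{\bm{\theta}}$ makes the output ultra-metric jump, and it is exactly the pairing of the MST-limit argument with the well-definedness of $\alpha$ on $\MST{\bm{\theta}}$ that resolves this.
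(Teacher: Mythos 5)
Your proof is correct, and at the one delicate point it is actually stronger than the paper's own argument. Both proofs share the same skeleton: per-edge MLE consistency plus a union bound gives $\bm{\bar\theta}_N\overset{p}{\to}\bm{\theta}$, Lemma~\ref{lemma:continuity of alpha} supplies continuity of $\alpha$, and a continuous-mapping argument transfers the convergence to the ultra-metrics. The difference lies in how the discontinuity of MST selection is handled. The paper declares that non-generic $\bm{\theta}$ ``may be omitted from consideration,'' so that $\bm{\theta}$ has a unique MST $T_\infty$ (Lemma~\ref{lemma:spanning tree decomposition}); then, with probability tending to one, $\bm{\bar\theta}_N$ lies in the interior of $C(T_\infty)$, the estimated tree stabilizes at $T_\infty$, and the continuous mapping theorem is applied to $\alpha$ on the single component $K_{T_\infty}$ of $\wspt{O}$. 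This restriction is not innocuous: $\bm{\theta}$ is a fixed, deterministic parameter rather than a random draw, so measure-zero reasoning does not really dispose of the non-generic case. Your sub-subsequence argument --- choose $T_k\in\MST{d_k}$, pass to a constant subsequence $T_k\equiv T$, use continuity of the total weight to conclude $T\in\MST{\bm{\theta}}$, and invoke the fact that $\treemap{T^w}=\slhc{w}$ for \emph{every} $T\in\MST{w}$ (Proposition~\ref{prop:slhc and cones}(a)) --- establishes continuity of $\slhc{\cdot}$ at every $\bm{\theta}\in\metr{O}$, generic or not, and hence consistency without any genericity caveat; your parenthetical Lipschitz alternative is also sound and arguably the cleanest route of all. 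What the paper's version buys is brevity; what yours buys is the removal of an implicit hypothesis, and it also repairs the paper's loose claim that ``all but finitely many of the $\bm{\bar\theta}_N$ are generic,'' an almost-sure-type statement that convergence in probability alone does not deliver --- the correct assertion is that the relevant event has probability tending to one, exactly as your event-inclusion bookkeeping at the end makes precise.
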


\begin{proof} Since we are interested in convergence in probability, we may omit from consideration the set of metrics $\bm{\theta}$ that are not generic. In particular, $\bm{\theta}$ has exactly one MST, denoted $T_\infty$, by Lemma~\ref{lemma:spanning tree decomposition}. Moreover, we have $\bm{\bar\theta}_N$ converging in probability to $\bm{\theta}$ because MLE is consistent, resulting in all but finitely many of the $\bm{\bar\theta}_N$ being generic. Then, without loss of generality, each $\bm{\bar\theta}_N$ has exactly one (weighted) MST, denoted $T_N$.  
We conclude that $(T_N,\bm{\bar\theta}_N)$ converges to $(T_\infty,\bm{\theta})$ in probability. Since $\alpha$ is continuous (Lemma~\ref{lemma:continuity of alpha}), we may apply the continuous mapping theorem~\cite{billingsley2013convergence} to obtain $\bar{\bm{u}}_N = \alpha(T_N,\bm{\bar\theta}_N) \overset{p}{\rightarrow} \alpha(T_\infty,\bm{\theta})$; since Proposition~\ref{prop:slhc and cones}(a) gives $\alpha(T_\infty,\bm{\theta}) = \slhc{\bm{\theta}} = \bm{u}$, we are done.
\end{proof}

\textbf{A sample simulation}: Simulations have been performed to illustrate the consistency property with the distribution model discussed in Section~\ref{sec:An_example_based_on_log-normal_distribution}. 
In the simulation the steps were as follows:
\begin{itemize}
\item[] \textbf{Step 1}. As Step 1 in Section~\ref{sec:An_example_based_on_log-normal_distribution};
\item[] \textbf{Step 2}. $N$ samples $\{ \bm{x}_1,\cdots,\bm{x}_N\}$ were randomly generated according to \eqref{eq:data_model_probability} for a specific $\sigma$, and $\bar{\bm{\theta}}_N$ was calculated;
\item[] \textbf{Step 3}. The ultra-metric $\bar{\bm{u}}_N = \slhc{\bar{\bm{\theta}}_N}$ was calculated and compared to the true value $\bm{u} = \slhc{\bm{\theta}}$.
\end{itemize}

In these simulations, $N$ was incremented from  $1$ to $2^{16}$ in powers of $2$, and  Steps 1-3 were repeated 10000 times. Again incorrect dendrogram structure identifications and mean errors were calculated. For four values of $\sigma = 0.3,0.2,0.1,0.05$, curves were plotted of proportions of incorrect identifications and mean errors in Figures~\ref{fig:Consistency_ratio} and~\ref{fig:Consistency_error} respectively. In both figures we observe convergence to the true value as $N\to \infty$. 

\begin{figure}%
\begin{center}
\parbox{0.45\columnwidth}{%
\includegraphics[width=0.4\columnwidth]{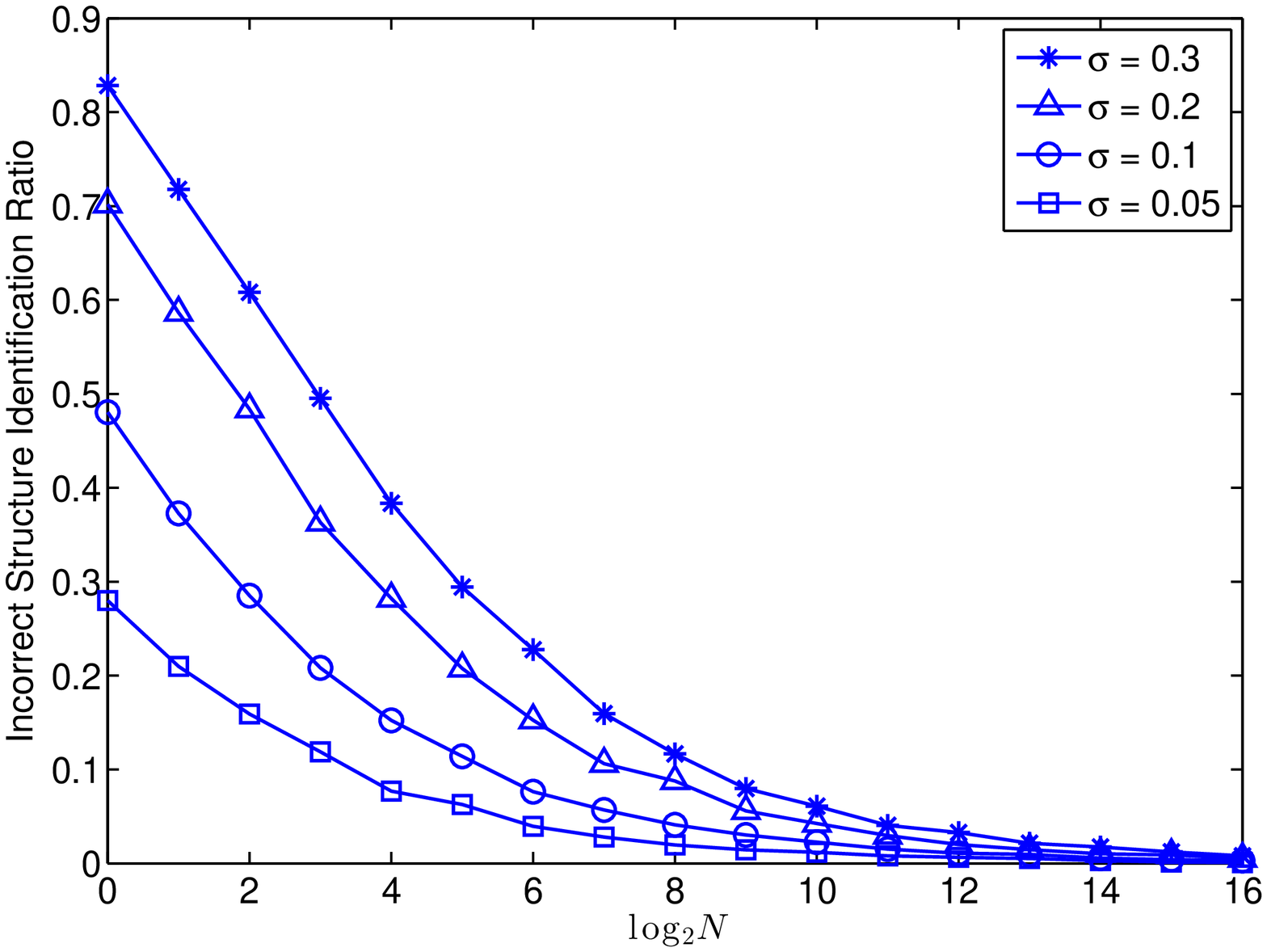}
\caption{Incorrect structure identification ratio versus $N$.}%
\label{fig:Consistency_ratio}}%
\qquad
\parbox{0.45\columnwidth}{%
\includegraphics[width=0.4\columnwidth]{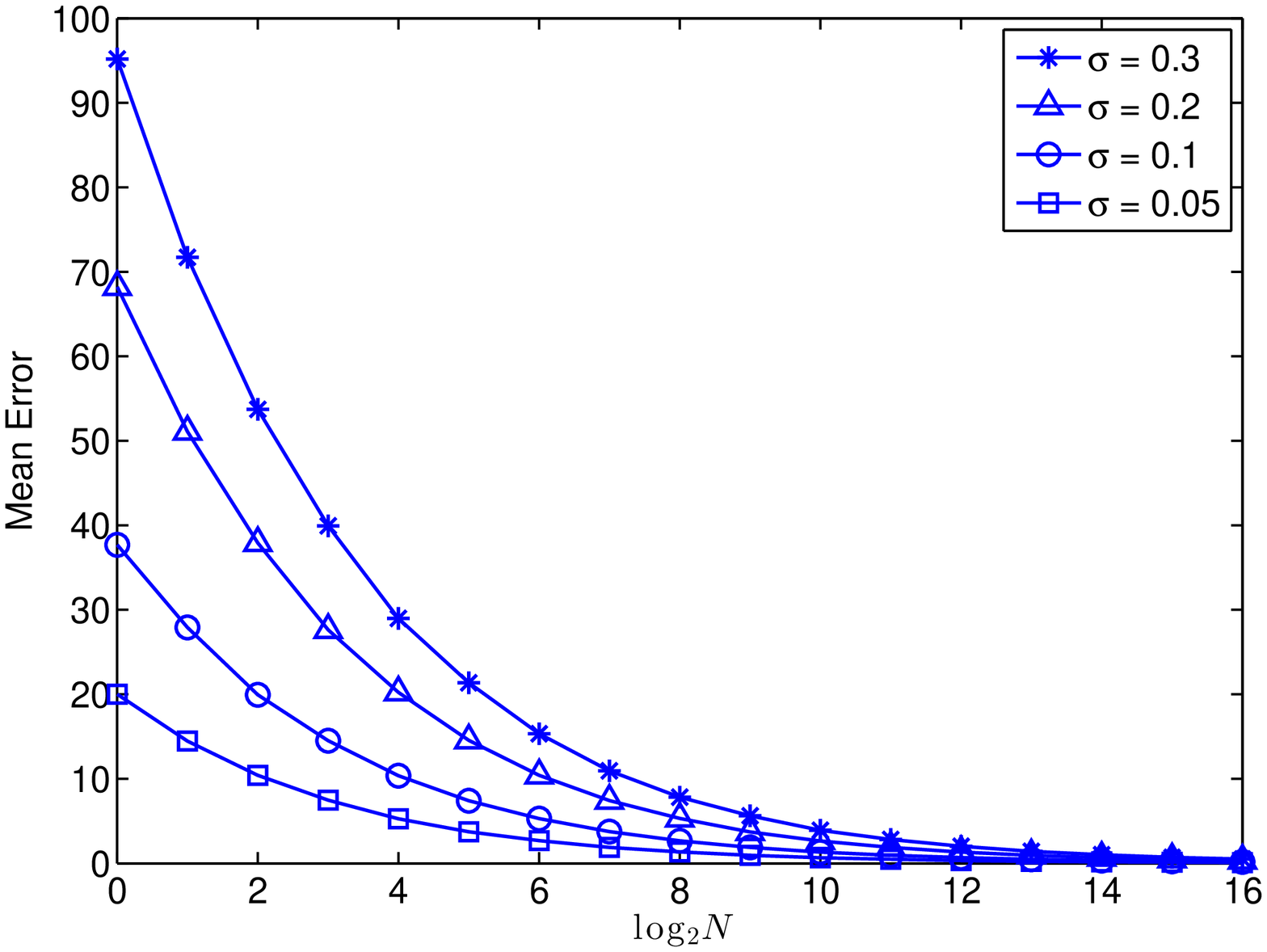}
\caption{Mean error of the ultra-metric versus $N$.}%
\centering
\label{fig:Consistency_error}}%
\end{center}
\end{figure}%

\section{Conclusion}\label{section:conclusion}

Our work here is an effort to address the issue of uncertainty of data surrounding the problem of hierarchical clustering. We restrict attention to the natural case where differences between data are given in terms of a metric but the information about that metric contains uncertainty modelled by a parametrized  (by the true metric) family of distributions. The hierarchical clustering approach is described in terms of an ultra-metric associated with the metric. The problem then becomes on of statistical estimation where the  parameter to be estimated is an ultra-metric. 

Despite its evident shortcomings, SLHC is proposed as the most natural algorithm for clustering in this context, following the work of Carlsson and Memoli, though with a simplification of their treatment. We show also that SLHC is equivalent to maximum partial profile likelihood estimator under reasonable models (see Theorem~\ref{thm:slhc_maxi_PLE}) for the uncertainty. SLHC is shown, also, to be a consistent estimator for multiple measurements.  The proofs of these results have required a study of the structure of the cone of weights  giving rise to a specific minimum spanning tree.

Our work strongly suggests that the use of the full maximum likelihood estimator, if available and computable, will outperform SLHC in contexts where the uncertainty is high. One focus of our future work will be on achieving computationally feasible but good approximations to the MLE.  

\section*{Acknowledgements}
This work was supported by the National Science Fund of China under Grant 61025006 and 61422114; the US Air Force Office of Science Research under Grant MURI FA9550-10-1-0567 and FA9550-12-1-0418.

\bibliographystyle{gSCS}
\bibliography{SLHC}
\medskip
\appendices
\section{Proofs of geometric and combinatorial results}
  \label{app:Proofs_of_materials_in_preliminary}
\subsection{Proof of Proposition~\ref{prop:slhc characterization}}
\label{app:Proof of Proposition slhc characterization}
\begin{proof} Equation \eqref{eq:slhc as sup} guarantees that $\slhc{\cdot}$ satisfies the above requirements, and it remains to verify the uniqueness claim. Suppose $E,F:\metr{X}\to\metr{X}$ are as stated. Then (a) $\ultra{X}$ coincides with the fixpoint sets of $E$ and $F$, and for every $d\in\metr{X}$ one has $F(Ed)=Ed$ and $E(Fd)=Fd$ since $Ed,Fd\in\ultra{X}$. Applying (b) and (c), we obtain:
	\begin{equation}
	d\geq Ed\THEN Fd\geq F(Ed)=Ed.
	\end{equation}
	By symmetry, $Ed\geq Fd$ for all $d\in\metr{O}$ as well.
\end{proof}

\subsection{Combinatorics of Minimum Spanning Trees}
\subsubsection{Proof of Lemma~\ref{lemma:mst comparison}}
\label{app:proof_of_lemma_mst_comparison}
In the rest of this section we will make extensive use of Kruskal's metric on $\spt{O}$:
\begin{equation}\label{eq:Kruskal metric}
	\kappa(T,T')=\card{T\minus T'}=\card{T'\minus T}.
\end{equation}
Recall that $\spt{O}$ forms a connected graph under the adjacency defined by $T\sim T'$ if and only if $\kappa(T,T')=1$. In fact, the set $\MST{d}$ of minimum spanning trees of $d$ is a connected sub-graph for any weight $d$. This observation underlies Kruskal's algorithm for computing MSTs \cite{West-textbook}.

\begin{proof} Suppose there are trees in $\MST{v}$ which are not MSTs of $w$. Then we may pick $T\in\MST{v}$ and $T^\ast\in\MST{w}$ so that $\card{T^\ast\minus T}$ is as small as possible, yet positive. We proceed to derive a contradiction by applying Kruskal moves as follows.
 	
First, find an edge $e^\ast\in T^\ast$ with $e^\ast\notin T$. Then $T\cup\{e^\ast\}$ contains a cycle of edges, $\ell$. We must have $v_{e^\ast}>v_f$ for all edges $f\in\ell,f\neq e^\ast$: if not, replace any violating edge of $\ell$ with the edge $e^\ast$ to obtain a tree $T^\sharp$ with either $\total{T^\sharp}{v}<\total{T}{v}$ or with $T^\sharp\in\MST{v}$ and $\card{T^\ast\minus T^\sharp}<\card{T^\ast\minus T}$~ ---~ a contradiction in either case. 
 	
Thus, by the assumption about $v$ and $w$, we have $w_{e^\ast}>w_f$ for all $f\in\ell$. Since $T^\ast$ cannot contain $\ell$, we find an edge $e\in\ell\minus T^\ast\subseteq T\minus T^\ast$. Since $w_e<w_{e^\ast}$, the tree obtained from $T^\ast$ by replacing $e^\ast$ with $e$ is a spanning tree of lower total weight under $w$ than $T^\ast$~ ---~ a contradiction again.
\end{proof}
 
\subsubsection{Proof of Corollary~\ref{cor:mst invariance under increasing functions}}
\begin{proof} Setting $v=g\circ w$ we observe that $v_e<v_f\IFF w_e<w_f$. Lemma~\ref{lemma:mst comparison} then provides us with the desired equality $\MST{w}=\MST{g\circ w}$.
\end{proof}

\subsection{Cones and the Geometry of SLHC}
\subsubsection{Proof of Lemma~\ref{lemma:cone of a tree}}
\begin{proof} Lemma~\ref{lemma:cone of a tree} is a corollary of Proposition~\ref{prop:slhc and cones}.
It suffices to take $a,b\in C(T)$, $\lambda\geq 0$ and to show that $a+\lambda b\in C(T)$ as well. Take any $xy\in{O\choose 2}$ and set $P=p(T)_{xy}$. We have:
\begin{equation}
 	\alpha(T^{a+\lambda b})_{xy}=
 	\max_{e\in P}(a_e+\lambda b_e)\leq
 	\max_{e\in P}a_e+\lambda\cdot\max_{e\in P}b_e\leq
 	a_{xy}+\lambda b_{xy}=
 	(a+\lambda b)_{xy}
\end{equation}
as desired. According to Proposition~\ref{prop:slhc and cones}, $a+\lambda b\in C(T)$. $C(T)$ is closed, because it is defined by a finite number of closed conditions.
\end{proof}
 
\subsubsection{Proof of Lemma~\ref{lemma:sl inverse images}}
\begin{proof} The proof follows directly from part (c) of Proposition~\ref{prop:slhc and cones} and Lemma~\ref{lemma:spanning tree decomposition}. Lemma~\ref{lemma:sl inverse images} is also a corollary of Proposition~\ref{prop:slhc and cones}.
\end{proof}
  
\subsubsection{Point Pre-Images under SLHC}
The preceding proofs depend on the following characterization of point pre-images of the map $\slhc{\cdot}$:
\begin{proposition}\label{prop:slhc and cones}
For all $T\in\spt{O}$ and $w\in\weight{O}$ one has:
\begin{equation}\label{eq:MST condition}
	T\in\MST{w}\IFF w\in C(T)\IFF w\geq\alpha(T^w).
\end{equation}
Moreover,
\begin{enumerate}
	\item[(a)] $T\in\MST{w}$ if and only if $\slhc{w}=\alpha(T^w)$;
	\item[(b)] If $T\in\MST{w}$ then $T\in\MST{\slhc{w}}$;
	\item[(c)] Finally, for any $u\in\ultra{O}$ one has $\slhc{w}=u$ if and only if $w$ and $u$ coincide on a common MST.
\end{enumerate}
\end{proposition}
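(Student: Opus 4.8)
The proposition bundles two genuinely distinct facts. The equivalence $T\in\MST{w}\IFF w\in C(T)$ is nothing but the definition of $C(T)$ from Lemma~\ref{lemma:spanning tree decomposition}, so the real content of the first display is the \emph{cycle-optimality} characterization $T\in\MST{w}\IFF w\geq\treemap{T^w}$. My plan is to prove this characterization first and then to read off parts (a)--(c) as formal consequences of it, combined with the supremum formula \eqref{eq:slhc as sup} and the fact that $\slhc{\cdot}$ fixes every ultra-metric.

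To prove the cycle-optimality equivalence, I first observe that for $e=xy\in T$ the path $p(T)_{xy}$ is the single edge $e$, so $\treemap{T^w}_e=w_e$; hence $w\geq\treemap{T^w}$ is a constraint only on the non-tree edges, reading $w_{xy}\geq\max_{f\in p(T)_{xy}}w_f$ for every $xy\notin T$. For the forward implication, if $T\in\MST{w}$ but $w_{xy}<w_f$ for some $xy\notin T$ and $f\in p(T)_{xy}$, then deleting $f$ splits $T$ into two components separating $x$ from $y$, and inserting $xy$ reconnects them, producing a spanning tree of strictly smaller total weight~ ---~ a contradiction. For the converse I assume $w\geq\treemap{T^w}$ and transform an arbitrary spanning tree $S$ into $T$ by weight-non-increasing edge swaps, forcing $\total{T}{w}\leq\total{S}{w}$ and hence $T\in\MST{w}$. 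The main obstacle is selecting each swap correctly, and the cut viewpoint resolves it: I pick $e\in T\minus S$, delete it from $T$ to obtain a cut $(V_1,V_2)$ whose only $T$-edge is $e$; the $S$-path joining the endpoints of $e$ must cross this cut in some edge $g\neq e$, forcing $g\notin T$. Since the endpoints of $g$ lie on opposite sides of the cut, the tree path joining them passes through $e$, so cycle-optimality yields $w_g\geq w_e$; replacing $g$ by $e$ in $S$ gives a spanning tree of no greater total weight with $\card{T\minus S}$ decreased by one. Iterating drives $S$ to $T$.

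With the equivalence in hand, part (a) follows quickly. If $T\in\MST{w}$ then $\treemap{T^w}\leq w$ by the equivalence, and $\treemap{T^w}\in\ultra{O}$, so \eqref{eq:slhc as sup} gives $\slhc{w}\geq\treemap{T^w}$; conversely, for any ultra-metric $u\leq w$ the iterated ultra-metric inequality along $p(T)_{xy}$ gives $u_{xy}\leq\max_{e\in p(T)_{xy}}u_e\leq\treemap{T^w}_{xy}$, whence $\slhc{w}\leq\treemap{T^w}$, so $\slhc{w}=\treemap{T^w}$. The reverse implication of (a) is immediate, since $\slhc{w}=\treemap{T^w}$ forces $\treemap{T^w}\leq w$, i.e. $T\in\MST{w}$. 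For (b), writing $v=\treemap{T^w}=\slhc{w}$, I note that $v$ agrees with $w$ on the edges of $T$, so the weighted tree $T^v$ equals $T^w$ and therefore $\treemap{T^v}=v$; thus $v\geq\treemap{T^v}$ holds (with equality), and the cycle-optimality equivalence gives $T\in\MST{v}=\MST{\slhc{w}}$.

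Finally, for (c) I interpret ``$w$ and $u$ coincide on a common MST'' as the existence of $T\in\MST{w}\cap\MST{u}$ with $w_e=u_e$ for all $e\in T$. In the forward direction I pick any $T\in\MST{w}$: part (a) gives $u=\slhc{w}=\treemap{T^w}$, so $u_e=w_e$ on $T$, while part (b) gives $T\in\MST{u}$, exhibiting the required common MST. Conversely, given such a $T$, part (a) applied to $w$ and to $u$ yields $\slhc{w}=\treemap{T^w}$ and $\slhc{u}=\treemap{T^u}$; since $w$ and $u$ coincide on $T$ we have $T^w=T^u$ and hence $\treemap{T^w}=\treemap{T^u}$, so $\slhc{w}=\slhc{u}=u$, the last equality because $\slhc{\cdot}$ fixes every ultra-metric by \eqref{eq:slhc as sup}. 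I expect the cut-based exchange in the converse of the cycle-optimality equivalence to be the only step requiring real care; everything else is bookkeeping with $\alpha$ and the supremum formula.
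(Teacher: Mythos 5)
Your proof is correct, and its overall architecture matches the paper's: first establish the characterization $T\in\MST{w}\IFF w\geq\treemap{T^w}$, then read off (a)--(c) from the supremum formula \eqref{eq:slhc as sup} together with bookkeeping for $\alpha$. The one place you genuinely diverge is the converse of \eqref{eq:MST condition}. The paper argues by contradiction: it chooses $Q\in\MST{w}$ at minimal Kruskal distance $\kappa(T,Q)$ from $T$ and performs a single path-based swap to produce an MST strictly closer to $T$, contradicting minimality. You instead give a direct, constructive argument via the cut property: starting from an \emph{arbitrary} spanning tree $S$, each swap across the cut determined by an edge $e\in T\minus S$ is weight-non-increasing (since $g$ crosses the cut, the $T$-path between its endpoints contains $e$, so $w_g\geq\treemap{T^w}_g\geq w_e$) and decreases $\card{T\minus S}$ by one, yielding $\total{T}{w}\leq\total{S}{w}$ for every $S$. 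Your version gives the slightly stronger and more transparent conclusion that $T$ is minimal against \emph{all} spanning trees, at the cost of checking that each swap produces a spanning tree --- which you handle correctly, since $g$ lies on the $S$-path between the endpoints of $e$, so $x$ and $y$ fall in different components of $S-g$. A second, smaller simplification: for (b) you apply the equivalence directly to $v=\treemap{T^w}$, using $T^v=T^w$ and hence $v=\treemap{T^v}$, whereas the paper routes through idempotency ($\slhc{\slhc{w}}=\slhc{w}$, i.e.\ Proposition~\ref{prop:slhc characterization}(a)) and a second application of part (a); your route is shorter. Parts (a) and (c) are essentially identical to the paper's, with your phrasing making explicit the useful side fact that the upper bound $\slhc{w}\leq\treemap{T^w}$ holds for every spanning tree, not only for MSTs.
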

\begin{proof} We first prove \eqref{eq:MST condition}. Denote $v=\alpha(T^w)$ for short. Suppose $T\in\MST{w}$ and fix $xy\in{O\choose 2}$. If $w_{xy}<v_{xy}$, then $p(T)_{xy}\cup\{xy\}$ is a cycle containing an edge $e\neq xy$ with $w_e=v_{xy}>w_{xy}$. But then the tree $Q=(T-e)\cup\{xy\}$ is a spanning tree of $K_O$ with $\total{Q}{w}<\total{T}{w}$ -- a contradiction. Conversely, suppose $w\geq v$ and choose $Q\in\MST{w}$ with $\kappa(T,Q)$ -- the Kruskal distance from $Q$ to $T$ (Section~\ref{section:weighted trees}) -- as small as possible. If $\kappa(T,Q)>0$, then there is an edge $xy\in Q\minus T$. Find an edge $e\in p(T)_{xy}$ joining the two connected components of the forest $Q-xy$. Then $Q'=(Q-xy)\cup\{e\}$ is a spanning tree with $\kappa(T,Q')=\kappa(T,Q)-1$. Now, $w_{xy}\geq\alpha(T^w)_{xy}\geq w_e$ implies $Q'$ is an MST -- contradiction to the choice of $Q$. We conclude that $\kappa(T,Q)=0$, or, equivalently, $T\in\MST{w}$, as claimed.

Next let us prove properties (a)-(c).

For (a), let $u=\alpha(T^w)$. If $\slhc{w}=u$, then from \eqref{eq:slhc as sup} we have $u\leq w$ and hence $T\in\MST{w}$, by \eqref{eq:MST condition} we have just proved. Conversely, suppose $u\leq w$. Then $u\leq\slhc{w}$ by \eqref{eq:slhc as sup}, while coinciding with $w$ along $T$. For any $xy\in{O\choose 2}$ we then have $\slhc{w}_{xy}\leq \text{ max }(w_e)$ for all $e\in p(T)_{xy}$ through repeatedly applying the ultra-metric inequality to $\slhc{w}$; but this implies $\slhc{w}_{xy}\leq u_{xy}$ and we have $u=\slhc{w}$.

To establish (b), suppose $T\in\MST{w}$ and use (a) to rewrite this as $\slhc{w}=u$. In particular, for every $e\in T$ we have $\slhc{w}_e=w_e$, and the weighted trees $T^w$ and $T^u$ coincide. We conclude:
\begin{displaymath}
	\slhc{u}=\slhc{\slhc{w}}=\slhc{w}=u=\alpha(T^w)=\alpha(T^u).
\end{displaymath}
Applying (a) to the metric $u$, implies $T\in\MST{u}$, as desired.
      			
Finally, for (c), writing $\slhc{w}=u$ we use the fact that $T^w=T^u$ for {\it any} $T\in\MST{w}$. Conversely, suppose $T\in\MST{w}\cap\MST{u}$ and $T^w=T^u$. We have: $T\in\MST{u}$ and $u\in\ultra{O}$ imply $u=\slhc{u}=\alpha(T^u)$; $T\in\MST{w}$ implies $\slhc{w}=\alpha(T^w)=\alpha(T^u)=u$, as required.
\end{proof}

\section{Proof of Proposition~\ref{prop:partial_likelihood}}
\label{append:proof_of_independence_partial}

\begin{proof} Obviously we have
	\begin{equation}
	p(\bm{\theta}|T,\bm{u}) = p(\bm{\theta}^{\text{on}}|T,{\bm{u}}) p(\bm{\theta}^{\text{off}}|\bm{\theta}^{\text{on}},T,{\bm{u}}) 
	\end{equation}
	and $p(\bm{\theta}^{\text{on}}|T,{\bm{u}}) = \prod_{e\in T} \delta (\theta_e- u_e)$. Given $T$ and $\bm{u}$, $\bm{\theta}^{\text{on}}$ is known as well, therefore, $p(\bm{\theta}^{\text{off}}|\bm{\theta}^{\text{on}},T,{\bm{u}}) = p(\bm{\theta}^{\text{off}}|T,{\bm{u}})$.  
	\begin{equation}
	p(\bm{\theta}|T,{\bm{u}}) = \prod_{e\in T} \delta (\theta_e- u_e) \cdot p(\bm{\theta}^{\text{off}}|T,{\bm{u}})
	\end{equation}
	
	\begin{equation}
	\begin{split}
	p(\bm{X}^{\text{on}}|T,{\bm{u}}) &= \int _{\bm{X}^{\text{off}}} p(\bm{X}|T,{\bm{u}})\dd \bm{X}^{\text{off}}\\
	&= \int _{\bm{X}^{\text{off}}} \int _{C(T,{\bm{u}})}  \prod_{e\in {O\choose{2}}} G_{\theta_{e}}(X_{e}) p(\bm{\theta}|T,{\bm{u}})\dd\bm{\theta} \dd\bm{X}^{\text{off}}\\
	&= \int_{C(T,{\bm{u}})}  \prod_{e\in T} G_{\theta_e}(X_e) p(\bm{\theta}|T,{\bm{u}}) \dd \bm{\theta}\\
	&= \prod_{e\in T} G_{u_e}(X_e)
	\end{split}
	\end{equation}
	\begin{equation}
	\begin{split}
	&p(\bm{X}|T,{\bm{u}}) = \int _{C(T,{\bm{u}})} \prod_{e\in{O\choose 2}} G_{\theta_{e}}(X_{e}) p(\bm{\theta}|T,{\bm{u}}) \dd \bm{\theta} \\
	&= \int _{\bm{\theta}^{\text{off}}} \int_{\bm{\theta}^{\text{on}}} G_{\theta_{e}}(X_{e}) \cdot \left [\prod_{e\in T} \delta (\theta_e-u_e) \cdot p(\bm{\theta}^{\text{off}}|T,{\bm{u}})\right] \dd\bm{\theta}^{\text{on}}\dd\bm{\theta}^{\text{off}}\\
	& =\prod_{e\in T} G_{u_e}(X_e) \int_{C(T,{\bm{u}})} \prod_{e \notin T}G_{\theta_{e}}(X_{e})p(\bm{\theta}^{\text{off}}|T,{\bm{u}})\dd \bm{\theta}^{\text{off}}
	\end{split}
	\end{equation}

	\begin{equation}\label{eq:independence_theta_0n_0ff}
	\begin{split}
	p(\bm{X}^{\text{off}}|\bm{X}^{\text{on}},T,{\bm{u}}) &= \frac{p(\bm{X}|T,{\bm{u}})}{p(\bm{X}^{\text{on}}|T,{\bm{u}})} = \int_{C(T,{\bm{u}})} \prod_{e \notin T}G_{\theta_{e}}(X_{e})p(\bm{\theta}^{\text{off}}|T,{\bm{u}})\dd\bm{\theta}^{\text{off}}\end{split}
	\end{equation}
	The right side of \eqref{eq:independence_theta_0n_0ff} is independent with $\bm{X}^{\text{on}}$, therefore,
	$p(\bm{X}^{\text{off}}|T,{\bm{u}}) = p(\bm{X}^{\text{off}}|\bm{X}^{\text{on}},T,{\bm{u}})$. Then \eqref{eq:two_parts_likelihood} follows. 
\end{proof}

\end{document}